\documentclass[journal, oneclomn]{IEEEtran}

\ifCLASSOPTIONpeerreview
\begin{center} \bfseries EDICS: CIF-OBI, CIF-SBI, IMT-SIM\end{center}
\setcounter{page}{1}
\fi

\usepackage{xr-hyper}
\usepackage[switch]{lineno}

\usepackage{graphics,amssymb,amsmath,epsfig,color}
\usepackage{graphicx}
\usepackage{multibib}

\usepackage[hyphens]{url}     
\usepackage[hidelinks]{hyperref}       
\usepackage{booktabs}       
\usepackage{amsfonts}       
\usepackage{amssymb}        
\usepackage{nicefrac}       
\usepackage{microtype}      
\usepackage{bm}				
\usepackage{cite}			
\usepackage{graphicx}		
\usepackage{mathtools}		
\usepackage{algorithm}
\usepackage{algpseudocode}
\usepackage{amsthm}			
\usepackage{enumitem}		
\usepackage{multirow}       
\usepackage{tabularx}	    
\usepackage{hhline}
\usepackage{arydshln}		
\usepackage{xcolor}


\setlength\dashlinedash{0.5pt}
\setlength\dashlinegap{1.5pt}
\setlength\arrayrulewidth{0.5pt}

\definecolor{lightgreen}{rgb}{.9,1,.9}


\newcolumntype{L}[1]{>{\raggedright\arraybackslash}p{#1}}
\newcolumntype{C}[1]{>{\centering\arraybackslash}p{#1}}
\newcolumntype{R}[1]{>{\raggedleft\arraybackslash}p{#1}}

\theoremstyle{plain} 
\newtheorem{proposition}{Proposition}
\newtheorem{definition}{Definition}
\newtheorem{theorem}{Theorem}
\newtheorem{lemma}{Lemma}

\newtheorem{assumption}{Assumption}

\def\defn{\,\coloneqq\,}
\def\argmin{\mathop{\mathsf{arg\,min}}} 

\def\lim{\mathop{\mathsf{lim}}} 
\def\min{\mathop{\mathsf{min}}}
\def\max{\mathop{\mathsf{max}}}
\def\dom{\mathop{\mathsf{dom}}}
\def\ri{\mathop{\mathsf{ri}}}

\def\prox{\mathsf{prox}}

\def\log{\mathsf{log}}
\def\zer{\mathsf{zer}}
\def\fix{\mathsf{fix}}

\def\proposed{IPA}


\def\Ibf{{\mathbf{I}}}


\def\ebm{{\bm{e}}}
\def\hbm{{\bm{h}}}
\def\sbm{{\bm{s}}}

\def\xbm{{\bm{x}}}

\def\gbm{{\bm{g}}}
\def\ybm{{\bm{y}}}
\def\zbm{{\bm{z}}}

\def\nbm{{\bm{n}}}

\def\vbm{{\bm{v}}}

\def\zerobm{\bm{0}}
\def\Abm{{\bm{A}}}
\def\Dbm{{\bm{D}}}


\def\vbmast{{\bm{v}^\ast}}
\def\xbmast{{\bm{x}^\ast}}

\def\zbmast{{\bm{z}^\ast}}
\def\sbmast{{\bm{s}^\ast}}

\def\xbmhat{{\widehat{\bm{x}}}}

\def\Tsf{{\mathsf{T}}}
\def\Ssf{{\mathsf{S}}}
\def\Dsf{{\mathsf{D}}}
\def\Gsf{{\mathsf{F}}}

\def\Nsf{{\mathsf{N}}}
\def\Gsf{{\mathsf{G}}}

\def\Gsfhat{{\widehat{\mathsf{G}}}}
\def\Isf{{\mathsf{I}}}
\def\Psf{{\mathsf{P}}}

\def\Rsf{{\mathsf{R}}}


\def\R{\mathbb{R}}

\def\E{\mathbb{E}}


\def\Ncal{{\mathcal{N}}}



\makeatletter
\renewcommand\subsubsection{\@startsection{subsubsection}{3}{\z@}%
        {3.5ex plus 1.5ex minus 1.5ex}%
        {0.7ex plus .5ex minus 0ex}%
        {\normalfont\normalsize\itshape}}
\makeatother


\begin{document}
\title{Scalable Plug-and-Play ADMM\\ with Convergence Guarantees}
\author{Yu~Sun$^*$,~\IEEEmembership{Student~Member,~IEEE},
                        Zihui~Wu$^*$,
                        Xiaojian~Xu$^*$,~\IEEEmembership{Student~Member,~IEEE},\\ Brendt~Wohlberg,~\IEEEmembership{Senior Member,~IEEE}, and Ulugbek~S.~Kamilov,~\IEEEmembership{Senior Member,~IEEE}%
\thanks{This material is based upon work supported by NSF award CCF-1813910 and by the Laboratory Directed Research and Development program of Los Alamos National Laboratory under project number 20200061DR. \emph{(Corresponding author: Ulugbek~S.~Kamilov.)}}
\thanks{Y.~Sun and X.~Xu is with the Department of
Computer Science \& Enginnering, Washington University in St.~Louis, MO 63130, USA.}
\thanks{Z..~Wu is with the Department of
Computer Science, California Institute of Technology, CA 91125, USA.}
\thanks{B.~Wohlberg is with Theoretical Division, Los Alamos National Laboratory, Los Alamos, NM 87545 USA.}
\thanks{U.~S.~Kamilov (email:~kamilov@wustl.edu)
is with the Department of
Computer Science \& Engineering and the Department of Electrical \& Systems Engineering, Washington University in St.~Louis, MO 63130, USA.}
\thanks{$^*$These authors contributed equally, and are listed in alphabetical order of their family names.}
}


\maketitle

\begin{abstract}
Plug-and-play priors (PnP) is a broadly applicable methodology for solving inverse problems by exploiting statistical priors specified as denoisers. Recent work has reported the state-of-the-art performance of PnP algorithms using pre-trained deep neural nets as denoisers in a number of imaging applications. However, current PnP algorithms are impractical in large-scale settings due to their heavy computational and memory requirements. This work addresses this issue by proposing an \emph{incremental} variant of the widely used PnP-ADMM algorithm, making it scalable to large-scale datasets. We theoretically analyze the convergence of the algorithm under a set of explicit assumptions, extending recent theoretical results in the area. Additionally, we show the effectiveness of our algorithm with nonsmooth data-fidelity terms and deep neural net priors, its fast convergence compared to existing PnP algorithms, and its scalability in terms of speed and memory.
\end{abstract}

\begin{IEEEkeywords}
Regularized image reconstruction, plug-and-play priors, deep learning, regularization parameter.
\end{IEEEkeywords}

\section{Introduction}

Plug-and-play priors (PnP) is a simple yet flexible methodology for imposing statistical priors without explicitly forming an objective function~\cite{Venkatakrishnan.etal2013, Sreehari.etal2016}. PnP algorithms alternate between imposing data consistency by minimizing a data-fidelity term and imposing a statistical prior by applying an additive white Gaussian noise (AWGN) denoiser. PnP draws its inspiration from the \emph{proximal algorithms} extensively used in nonsmooth composite optimization~\cite{Parikh.Boyd2014}, such as the proximal-gradient method (PGM)~\cite{Figueiredo.Nowak2003, Daubechies.etal2004, Bect.etal2004, Beck.Teboulle2009} and alternating direction method of multipliers (ADMM)~\cite{Eckstein.Bertsekas1992, Afonso.etal2010, Ng.etal2010, Boyd.etal2011}. The popularity of deep learning has led to a wide adoption of PnP for exploiting \emph{learned} priors specified through pre-trained deep neural nets, leading to its state-of-the-art performance in a variety of applications~\cite{Zhang.etal2017a, Dong.etal2019, Zhang.etal2019, Ahmad.etal2020, Wei.etal2020}. Its empirical success has spurred a follow-up work that provided theoretical justifications to PnP in various settings~\cite{Chan.etal2016, Meinhardt.etal2017, Buzzard.etal2017, Sun.etal2018a, Tirer.Giryes2019, Teodoro.etal2019, Ryu.etal2019}. Despite this progress, current PnP algorithms are not practical for addressing large-scale problems due to their computation time and memory requirements. To the best of our knowledge, the only prior work on developing PnP algorithms that are suitable for large-scale problems is the \emph{stochastic gradient descent variant of PnP (PnP-SGD)}, whose fixed-point convergence was recently analyzed for smooth data-fidelity terms~\cite{Sun.etal2018a}.

In this work, we present a new \emph{incremental PnP-ADMM (IPA)} algorithm for solving large-scale inverse problems.  As an extensions of the widely used PnP-ADMM~\cite{Venkatakrishnan.etal2013, Sreehari.etal2016}, IPA can integrate statistical information from a data-fidelity term and a pre-trained deep neural net. However, unlike PnP-ADMM, IPA can effectively scale to datasets that are too large for traditional batch processing by using a single element or a small subset of the dataset at a time. The memory and per-iteration complexity of IPA is independent of the number of measurements, thus allowing it to deal with very large datasets. Additionally, unlike PnP-SGD~\cite{Sun.etal2018a}, IPA can effectively address problems with \emph{nonsmooth} data-fidelity terms, and generally has faster convergence. We present a detailed convergence analysis of IPA under a set of explicit assumptions on the data-fidelity term and the denoiser. Our analysis extends the recent fixed-point analysis of PnP-ADMM in~\cite{Ryu.etal2019} to partial randomized processing of data. To the best of our knowledge, the proposed scalable PnP algorithm and corresponding convergence analysis are absent from the current literature in this area. Our numerical validation demonstrates the practical effectiveness of IPA for integrating nonsmooth data-fidelity terms and deep neural net priors, its fast convergence compared to PnP-SGD, and its scalability in terms of both speed and memory. In summary, we establish IPA as a flexible, scalable, and theoretically sound PnP algorithm applicable to a wide variety of large-scale problems.

\section{Background}
\label{Sec:Background}

Consider the problem of estimating an unknown vector $\xbm \in \R^n$ from a set of noisy measurements $\ybm \in \R^m$. It is standard practice to formulate the solution as an optimization problem
\begin{equation}
\label{Eq:RegularizedOptimization}
\min_{\xbm \in \R^n} f(\xbm) \quad\text{with}\quad f(\xbm) \defn g(\xbm) + h(\xbm),
\end{equation}
where $g$ is a data-fidelity term that quantifies consistency with the observed data $\ybm$ and $h$ is a regularizer that encodes prior knowledge on $\xbm$. As an example, consider the nonsmooth $\ell_1$-norm data-fidelity term $g(\xbm) = \|\ybm-\Abm\xbm\|_1$, which assumes a linear observation model $\ybm = \Abm\xbm + \ebm$, and the TV regularizer $h(\xbm)=\tau\|\Dbm\xbm\|_1$, where $\Dbm$ is the gradient operator and $\tau > 0$ is the regularization parameter. Common applications of~\eqref{Eq:RegularizedOptimization} include sparse vector recovery in compressive sensing~\cite{Candes.etal2006, Donoho2006}, image restoration using total variation (TV)~\cite{Beck.Teboulle2009a}, and low-rank matrix completion~\cite{Recht.etal2010}.

Proximal algorithms are often used for solving problems of form~\eqref{Eq:RegularizedOptimization} when $g$ or $h$ are nonsmooth~\cite{Parikh.Boyd2014}. For example, one such standard algorithm, ADMM, can be summarized as
\begin{subequations}
\begin{align}
&\zbm^k = \prox_{\gamma g}(\xbm^{k-1}+\sbm^{k-1})\\
&\xbm^k = \prox_{\gamma h}(\zbm^k - \sbm^{k-1})\\
&\sbm^k = \sbm^{k-1} + \xbm^k - \zbm^k,
\end{align}
\end{subequations}
where $\gamma > 0$ is the penalty parameter~\cite{Boyd.etal2011} and \emph{proximal operator} is defined as
\begin{equation}
\label{Eq:ProximalOperator}
\prox_{\tau h}(\zbm) \defn \argmin_{\xbm \in \R^n} \left\{\frac{1}{2}\|\xbm-\zbm\|_2^2 + \tau h(\xbm)\right\}
\end{equation}
for any proper, closed, and convex function $h$~\cite{Parikh.Boyd2014}. The proximal operator can be interpreted as a \emph{maximum a posteriori probability (MAP)} estimator for the AWGN denoising problem
\begin{equation}
\zbm = \xbm + \nbm \quad\text{where}\quad \xbm \sim p_\xbm, \quad \nbm \sim \Ncal(\zerobm, \tau \Ibf),
\end{equation}
by setting $h(\xbm) = -\log(p_\xbm(\xbm))$. This perspective inspired the development of PnP~\cite{Venkatakrishnan.etal2013, Sreehari.etal2016}, where the proximal operator is simply replaced by a more general denoiser $\Dsf: \R^n \rightarrow \R^n$ such as BM3D~\cite{Dabov.etal2007} or DnCNN~\cite{Zhang.etal2017}. For example, the widely used PnP-ADMM can be summarized as
\begin{subequations}
\label{Eq:PnPADMM}
\begin{align}
&\zbm^k = \prox_{\gamma g}(\xbm^{k-1}+\sbm^{k-1})\\
&\xbm^k = \Dsf_\sigma(\zbm^k - \sbm^{k-1})\\
&\sbm^k = \sbm^{k-1} + \xbm^k - \zbm^k,
\end{align}
\end{subequations}
where, in analogy with $\tau > 0$ in~\eqref{Eq:ProximalOperator}, we introduce the parameter $\sigma > 0$ controlling the relative strength of the denoiser. Remarkably, this heuristic of using denoisers not associated with any $h$ within an iterative algorithm exhibited great empirical success~\cite{Zhang.etal2017a, Dong.etal2019, Zhang.etal2019, Ahmad.etal2020} and spurred a great deal of theoretical work on PnP algorithms~\cite{Chan.etal2016, Meinhardt.etal2017, Buzzard.etal2017, Sun.etal2018a, Tirer.Giryes2019, Teodoro.etal2019, Ryu.etal2019}.

An elegant fixed-point convergence analysis of PnP-ADMM was recently presented in~\cite{Ryu.etal2019}. By substituting $\vbm^k = \zbm^k - \sbm^{k-1}$ into PnP-ADMM, the algorithm is expressed in terms of an operator
\begin{equation}
\label{Eq:DROperator}
\Psf \defn \frac{1}{2}\Isf + \frac{1}{2}(2\Gsf-\Isf)(2\Dsf_\sigma-\Isf) \quad\text{with}\quad \Gsf \defn \prox_{\gamma g},
\end{equation}
where $\Isf$ denotes the identity operator. The convergence of PnP-ADMM is then established through its equivalence to the fixed-point convergence of the sequence $\vbm^k = \Psf(\vbm^{k-1})$. The equivalence of PnP-ADMM to the iterations of the operator~\eqref{Eq:DROperator} originates from the well-known relationship between ADMM and the Douglas-Rachford splitting~\cite{Parikh.Boyd2014, Eckstein.Bertsekas1992, Buzzard.etal2017, Ryu.etal2019}.

Scalable optimization algorithms have become increasingly important in the context of large-scale problems arising in machine learning and data science~\cite{Bottou.etal2018}. Stochastic and online optimization techniques have been investigated for traditional ADMM~\cite{Wang.Banerjee2012, Ouyang.etal2013, Suzuki2013, Zhong.etal2014, Huang.etal2019}, where $\prox_{\gamma g}$ is approximated using a subset of observations (with or without subsequent linearization). Our work contributes to this area by investigating the scalability of PnP-ADMM that is \emph{not} minimizing any explicit objective function. Since PnP-ADMM can integrate powerful deep neural net denoisers, there is a need to understand its theoretical properties and ability to address large-scale imaging problems.

Before introducing our algorithm, it is worth briefly mentioning an emerging paradigm of using deep neural nets for solving ill-posed imaging inverse problems (see, reviews~\cite{McCann.etal2017, Lucas.etal2018, Knoll.etal2020, Ongie.etal2020}). This work is most related to techniques that explicitly decouple the measurement model from the learned prior. For example, learned denoisers have been adopted for a class of algorithms in compressive sensing known as \emph{approximate message passing (AMP)}~\cite{Tan.etal2015, Metzler.etal2016a, Metzler.etal2016, Fletcher.etal2018}. The key difference of PnP from AMP is that it does not assume random measurement operators. \emph{Regularization by denoising (RED)} is a closely related method that specifies an explicit regularizers that has a simple gradient~\cite{Romano.etal2017, Bigdeli.etal2017, Sun.etal2019b, Mataev.etal2019}. PnP does not seek the existence of such an objective. Instead interpreting solutions as equilibrum points balancing the data-fit and the prior~\cite{Buzzard.etal2017}. Finally, a recent line of work has investigated the recovery and convergence guarantees for priors specified by \emph{generative adversarial networks (GANs)}~\cite{Bora.etal2017, Shah.Hegde2018, Hyder.etal2019, Raj.etal2019, Latorre.etal2019}. PnP does not seek to project its iterates to the range of a GAN, instead it directly uses the output of a simple AWGN denoiser to improve the estimation quality. This simplifies the training and application of learned priors within the PnP methodology. Our work contributes to this broad area by providing new conceptual, theoretical, and empirical insights into incremental ADMM optimization under statistical priors specified as deep neural net denoisers.

\section{Incremental PnP-ADMM}
\label{Sec:Algorithm}

\begin{algorithm}[t]
        \caption{Incremental Plug-and-Play ADMM (IPA)}\label{Alg:IPA}
        \begin{algorithmic}[1]
                \State \textbf{input: } initial values $\xbm^0, \sbm^0 \in \R^n$, parameters $\gamma, \sigma > 0$.
                \For{$k = 1, 2, 3, \dots$}
                \State Choose an index $i_k \in \{1,\dots,b\}$
                \State $\zbm^k \leftarrow \Gsf_{i_k}(\xbm^{k-1}+\sbm^{k-1})$ where $\Gsf_{i_k} \defn \prox_{\gamma g_{i_k}}$
                \State $\xbm^k \leftarrow \Dsf_\sigma(\zbm^k-\sbm^{k-1})$
                \State $\sbm^k \leftarrow \sbm^{k-1}+\xbm^k-\zbm^k$
                \EndFor
        \end{algorithmic}
\end{algorithm}%

Batch PnP algorithms operate on the whole observation vector $\ybm \in \R^m$. We are interested in partial randomized processing of observations by considering the decomposition of $\R^m$ into $b \geq 1$ blocks
\begin{equation*}
\R^m = \R^{m_1} \times \R^{m_2} \times \cdots \times \R^{m_b} \quad\text{with}\quad m = m_1 + m_2 +  \cdots + m_b.
\end{equation*}
We thus consider data-fidelity terms of the form
\begin{equation}
\label{Eq:DataFidelityComponents}
g(\xbm) = \frac{1}{b} \sum_{i = 1}^b g_i(\xbm), \quad \xbm \in \R^n,
\end{equation}
where each $g_i$ is evaluated only on the subset $\ybm_i \in \R^{m_i}$ of the full data $\ybm$.

PnP-ADMM is often impractical when $b$ is very large due to the complexity of computing $\prox_{\gamma g}$. As shown in Algorithm~\ref{Alg:IPA}, the proposed IPA algorithm extends stochastic variants of traditional ADMM~\cite{Wang.Banerjee2012, Ouyang.etal2013, Suzuki2013, Zhong.etal2014, Huang.etal2019} by integrating denoisers $\Dsf_\sigma$ that are \emph{not} associated with any $h$. Its per-iteration complexity is independent of the number of data blocks $b$, since it processes only a single component function $g_i$ at every iteration.

In principle, IPA can be implemented using different  block selection rules. The strategy adopted for our theoretical analysis focuses on the usual strategy of selecting indices $i_k$ as independent and identically distributed (i.i.d.) random variables distributed uniformly over $\{1, \dots, b\}$. An alternative would be to proceed in epochs of $b$ consecutive iterations, where at the start of each epoch the set $\{1, \dots, b\}$ is reshuffled, and $i_k$ is selected from this ordered set~\cite{Bertsekas2011}. In some applications, it might also be beneficial to select indices $i_k$ in an online data-adaptive fashion by taking into account the statistical relationships among observations~\cite{Tian.etal2015, Kellman.etal2020}.

Unlike PnP-SGD, IPA does not require smoothness of the functions $g_i$. Instead of computing the partial gradient $\nabla g_i$, as is done in PnP-SGD, IPA evaluates the partial proximal operator $\Gsf_i$. Thus, the maximal benefit of IPA is expected for problems in which $\Gsf_i$ is efficient to evaluate. This is a case for a number of functions commonly used in computational imaging, compressive sensing, and machine learning (see the extensive discussion on proximal operators in~\cite{Beck2017a}).

Let us discuss two widely used scenarios. The proximal operator of the $\ell_2$-norm data-fidelity term ${g_i(\xbm)= \frac{1}{2}\|\ybm_i-\Abm_i\xbm\|_2^2}$ has a closed-form solution
\begin{equation}
\label{Eq:ProximalExample}
\Gsf_i(\zbm) = \prox_{\gamma g_i}(\zbm) = \left(\Ibf+\gamma\Abm_i^\Tsf\Abm_i\right)^{-1}\left(\zbm+\gamma\Abm_i^\Tsf\ybm\right)
\end{equation}
for $\gamma > 0$ and $\zbm \in \R^n$. Prior work has extensively discussed efficient strategies for evaluating~\eqref{Eq:ProximalExample} for a variety of linear operators, including convolutions, partial Fourier transforms, and subsampling masks~\cite{Afonso.etal2010, Wohlberg2016, Ramani.Fessler2012a, Almeida.Figueiredo2013}. As a second example, consider the $\ell_1$-data fidelity term $g_i(\xbm) = \|\ybm_i-\Abm_i\xbm\|_1$, which is nonsmooth. The corresponding proximal operator has a closed form solution for any orthogonal operator $\Abm_i$ and can also be efficiently computed in many other settings~\cite{Beck2017a}. We numerically evaluate the effectiveness of IPA on both $\ell_1$- and $\ell_2$-norm data-fidelity terms and deep neural net priors in Section~\ref{Sec:Experiments}.

IPA can also be implemented as a \emph{minibatch} algorithm, processing several blocks in parallel at every iteration, thus improving its efficiency on multi-processor hardware architectures. Algorithm~\ref{Alg:MinibatchIPA} presents the minibatch version of IPA that averages several proximal operators evaluated over different data blocks. When the minibatch size $p = 1$, Algorithm~\ref{Alg:MinibatchIPA} reverts to Algorithm~\ref{Alg:IPA}. The main benefit of minibatch IPA is its suitability for parallel computation of $\Gsfhat$, which can take advantage of multi-processor architectures.

Minibatch IPA is related to the \emph{proximal average} approximation of $\Gsf = \prox_{\gamma g}$~\cite{Bauschke.etal2008, Yu2013}
\begin{equation*}
\overline{\Gsf}(\xbm) = \frac{1}{b} \sum_{i = 1}^b \prox_{\gamma g_i }(\xbm), \quad \xbm \in \R^n.
\end{equation*}
When Assumption~\ref{As:WeaklyConvexData} is satisfied, then the approximation error is bounded for any $\xbm \in \R^n$ as~\cite{Yu2013}
\begin{equation*}
\|\Gsf(\xbm)-\overline{\Gsf}(\xbm)\| \leq 2 \gamma L.
\end{equation*}
Minibatch IPA thus simply uses a minibatch approximation $\Gsfhat$ of the proximal average $\overline{\Gsf}$. One implication of this is that even when the minibatch is \emph{exactly} equal to the full measurement vector, minibatch IPA is not exact due to the approximation error introduced by the proximal average. However, the resulting approximation error can be made as small as desired by controlling the penalty parameter $\gamma > 0$.

\begin{algorithm}[t]
        \caption{Minibatch IPA}\label{Alg:MinibatchIPA}
        \begin{algorithmic}[1]
                \State \textbf{input: } initial values $\xbm^0, \sbm^0 \in \R^n$, parameters $\gamma, \sigma > 0$, minibatch size $p \geq 1$.
                \For{$k = 1, 2, 3, \dots$}
                \State Choose indices $i_1, \dots, i_p$ from the set $\{1,\dots,b\}$.
                \State $\zbm^k \leftarrow \Gsfhat(\xbm^{k-1}+\sbm^{k-1})$ where
                $\Gsfhat\defn \frac{1}{p}\sum_{j = 1}^p \prox_{\gamma g_{i_j}}$
                \State $\xbm^k \leftarrow \Dsf_\sigma(\zbm^k-\sbm^{k-1})$
                \State $\sbm^k \leftarrow \sbm^{k-1}+\xbm^k-\zbm^k$
                \EndFor
        \end{algorithmic}
\end{algorithm}%

\section{Theoretical Analysis}
\label{Sec:Theory}

We now present a theoretical analysis of IPA. We fist present an intuitive interpretation of its solutions, and then present our convergence analysis under a set of explicit assumptions.

\subsection{Fixed Point Interpretation}
\label{Sec:FixPointInterp}

IPA cannot be interpreted using the standard tools from convex optimization, since its solution is generally not a minimizer of an objective function. Nonetheless, we develop an intuitive operator based interpretation (see Appendix~\ref{Sec:Equilibrium} for additional details).

Consider the following set-valued operator
\begin{equation}
\label{Eq:SolutionSet}
\Tsf \defn \gamma \partial g + (\Dsf_\sigma^{-1} - \Isf), \quad \gamma > 0,
\end{equation}
where $\partial g$ is the subdifferential of the data-fidelity term and $\Dsf_\sigma^{-1}(\xbm) \defn \{\zbm \in \R^n : \xbm = \Dsf_\sigma(\zbm)\}$ is the inverse operator of the denoiser $\Dsf_\sigma$. Note that this inverse operator exists even when $\Dsf_\sigma$ is not one-to-one~\cite{Eckstein.Bertsekas1992, Ryu.Boyd2016}. By characterizing the fixed points of PnP algorithms, it can be shown that their solutions can be interpreted as vectors in the zero set of $\Tsf$
\begin{align*}
&\zerobm \in \Tsf(\xbmast) = \gamma \partial g(\xbmast) + (\Dsf_\sigma^{-1}(\xbmast) - \xbmast) \\
&\Leftrightarrow \quad
 \xbmast \in \zer(\Tsf) \defn \{\xbm \in \R^n : \zerobm \in \Tsf(\xbm) \}.
\end{align*}
Consider the following two sets
\begin{align*}
&\zer(\partial g) \defn \{\xbm \in \R^n : \zerobm \in \partial g(\xbm)\}\quad\text{and}\\
&\fix(\Dsf_\sigma) \defn \{\xbm \in \R^n : \xbm = \Dsf_\sigma(\xbm)\},
\end{align*}
where $\zer(\partial g)$ is the set of all critical points of the data-fidelity term and $\fix(\Dsf_\sigma)$ is the set of all fixed points of the denoiser. Intuitively, the fixed points of $\Dsf_\sigma$ correspond to all vectors that are \emph{not} denoised, and therefore can be interpreted as vectors that are \emph{noise-free} according to the denoiser.

If $\xbmast \in \zer(\partial g) \cap \fix(\Dsf_\sigma)$, then $\xbmast \in \zer(\Tsf)$, which implies that $\xbmast$ is one of the solutions. Hence, any vector that minimizes a convex data-fidelity term $g$ and noiseless according to $\Dsf_\sigma$ is in the solution set. On the other hand, when $\zer(\partial g)\cap \fix(\Dsf_\sigma) = \varnothing$, then $\xbmast \in \zer(\Tsf)$ corresponds to an equilibrium point between two sets.

This interpretation of PnP highlights one important aspect that is often overlooked in the literature, namely that, unlike in the traditional formulation~\eqref{Eq:RegularizedOptimization}, the regularization in PnP depends on both the denoiser parameter $\sigma > 0$ and the penalty parameter $\gamma > 0$, with both influencing the solution. Hence, the best performance is obtained by jointly tuning both parameters for a given experimental setting. In the special case of $\Dsf_\sigma = \prox_{\gamma h}$ with $\gamma = \sigma^2$, we have
\begin{align*}
&\fix(\Dsf_\sigma) = \{\xbm \in \R^n: \zerobm \in \partial h(\xbm)\} \quad\text{and}\\
&\zer(\Tsf) \defn \{\xbm \in \R^n : \zerobm \in \partial g(\xbm) + \partial h(\xbm) \},
\end{align*}
which corresponds to the optimization formulation~\eqref{Eq:RegularizedOptimization} whose solutions are independent of $\gamma$.

\subsection{Convergence Analysis}

Our analysis requires three assumptions that jointly serve as sufficient conditions.

\medskip\noindent
\begin{assumption}
\label{As:WeaklyConvexData}
Each $g_i$ is proper, closed, convex, and Lipschitz continuous with constant $L_i > 0$. We define the largest Lipschitz constant as $L = \max\{L_1, \dots, L_b\}$.
\end{assumption}
This assumption is commonly adopted in nonsmooth optimization and is equivalent to existence of a global upper bound on subgradients~\cite{Boyd.Vandenberghe2008, Yu2013, Ouyang.etal2013}. It is satisfied by a large number of functions, such as the $\ell_1$-norm. The $\ell_2$-norm also satisfies Assumption~\ref{As:WeaklyConvexData} when it is evaluated over a bounded subset of $\R^n$. We next state our assumption on $\Dsf_\sigma$.

\medskip\noindent
\begin{assumption}
\label{As:AveragedDenoiser}
The residual $\Rsf_\sigma \defn \Isf - \Dsf_\sigma$ of the denoiser $\Dsf_\sigma$ is firmly nonexpansive.
\end{assumption}
We review firm nonexpansiveness and other related concepts in the Appendix~\ref{Sec:Equilibrium}.
Firmly nonexpansive operators are a subset of \emph{nonexpansive} operators (those that are Lipschitz continuous with constant one). A simple strategy to obtain a firmly nonexpansive operator is to create a $(1/2)$-averaged operator from a nonexpansive operator~\cite{Parikh.Boyd2014}. The residual $\Rsf_\sigma$ is firmly nonexpansive \emph{if and only if} $\Dsf_\sigma$ is firmly nonexpansive, which implies that the proximal operator automatically satisfies Assumption~\ref{As:AveragedDenoiser}~\cite{Parikh.Boyd2014}.

The rationale for stating Assumption~\ref{As:AveragedDenoiser} for $\Rsf_\sigma$ is based on our interest in \emph{residual} deep neural nets. The success of residual learning in the context of image restoration is well known~\cite{Zhang.etal2017}. Prior work has also shown that Lipschitz constrained residual networks yield excellent performance without sacrificing stable convergence~\cite{Ryu.etal2019, Sun.etal2019b}. Additionally, there has recently been an explosion of techniques for training Lipschitz constrained and firmly nonexpansive deep neural nets~\cite{Ryu.etal2019, Terris.etal2020, Miyato.etal2018, Fazlyab.etal2019}.

\medskip\noindent
\begin{assumption}
\label{As:ExistenceOfFixed}
The operator $\Tsf$ in~\eqref{Eq:SolutionSet} is such that $\zer(\Tsf) \neq \varnothing$. There also exists $R < \infty$ such that
\begin{equation*}
\|\xbm^k-\xbmast\|_2
\leq R \quad\text{for all}\quad \xbmast \in \zer(\Tsf).
\end{equation*}
\end{assumption}

The first part of the assumptions simply ensures the existence of a solution. The existence of the bound $R$ often holds in practice, as many denoisers have bounded range spaces. In particular, this is true for a number of image denoisers whose outputs live within the bounded subset $[0, 255]^n \subset \R^n$.

We will state our convergence results in terms of the operator $\Ssf: \R^n \rightarrow \R^n$ defined as
\begin{equation}
\label{Eq:DRSMonOperator}
\Ssf \defn \Dsf_\sigma-\Gsf(2\Dsf_\sigma-\Isf).
\end{equation}
Both IPA and traditional PnP-ADMM can be interpreted as algorithms for computing an element in $\zer(\Ssf)$, which is equivalent to finding an element of $\zer(\Tsf)$ (see details in Appendix~\ref{Sec:Equilibrium}).

We are now ready to state our main result on IPA.
\begin{theorem}
\label{Thm:ConvThm1}
Run IPA for $t \geq 1$ iterations with random i.i.d.~block selection under Assumptions~\ref{As:WeaklyConvexData}-\ref{As:ExistenceOfFixed} using a fixed penalty parameter $\gamma > 0$. Then, the sequence $\vbm^k = \zbm^k-\sbm^{k-1}$ satisfies
\begin{equation}
\label{Eq:MainBound}
\E\left[\frac{1}{t}\sum_{k = 1}^t \|\Ssf(\vbm^k)\|_2^2\right] \leq \frac{(R+2\gamma L)^2}{t} + \max\{\gamma, \gamma^2\} C,
\end{equation}
where $C \defn 4 L R + 12 L^2$ is a positive constant.
\end{theorem}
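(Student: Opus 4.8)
The plan is to recast IPA as a \emph{perturbed} iteration of the batch Douglas--Rachford operator $\Psf$ from~\eqref{Eq:DROperator} and then run a telescoping argument driven by the firm nonexpansiveness of $\Psf$, with the incremental perturbation controlled \emph{deterministically} by the Lipschitz bound of Assumption~\ref{As:WeaklyConvexData}. First I would substitute $\vbm^k = \zbm^k - \sbm^{k-1}$ into Algorithm~\ref{Alg:IPA}, exactly as in the batch reduction leading to~\eqref{Eq:DROperator}, to obtain the one-line recursion $\vbm^{k+1} = \vbm^k - \Ssf_{i_{k+1}}(\vbm^k)$, where $\Ssf_i \defn \Dsf_\sigma - \Gsf_i(2\Dsf_\sigma - \Isf)$ is the incremental analogue of $\Ssf$ in~\eqref{Eq:DRSMonOperator} and $\Ssf = \Isf - \Psf$. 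Setting $\ubm^k \defn (2\Dsf_\sigma - \Isf)(\vbm^k)$ and $\deltabm^k \defn \Ssf_{i_{k+1}}(\vbm^k) - \Ssf(\vbm^k) = \Gsf(\ubm^k) - \Gsf_{i_{k+1}}(\ubm^k)$, this becomes
\[
\vbm^{k+1} = \Psf(\vbm^k) - \deltabm^k ,
\]
i.e.\ one exact batch DR step followed by a bounded error.

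Next I would assemble three ingredients. (i) Under Assumptions~\ref{As:WeaklyConvexData}--\ref{As:AveragedDenoiser}, $\Dsf_\sigma$ and every $\Gsf_i = \prox_{\gamma g_i}$ (hence $\Gsf = \prox_{\gamma g}$) are firmly nonexpansive, so the reflections $2\Dsf_\sigma - \Isf$ and $2\Gsf - \Isf$ are nonexpansive and $\Psf$ is firmly nonexpansive; since $\Ssf(\vbmast) = \zerobm$ for $\vbmast \in \zer(\Ssf)$, this yields the batch descent $\|\Psf(\vbm^k) - \vbmast\|_2^2 \leq \|\vbm^k - \vbmast\|_2^2 - \|\Ssf(\vbm^k)\|_2^2$. (ii) Because each $g_i$ is $L$-Lipschitz, the proximal residual obeys $\|\prox_{\gamma g_i}(\wbm) - \wbm\|_2 \leq \gamma L$ for every $\wbm$; applied at $\ubm^k$ this gives the \emph{deterministic} bound $\|\deltabm^k\|_2 \leq \|\Gsf(\ubm^k)-\ubm^k\|_2 + \|\ubm^k - \Gsf_{i_{k+1}}(\ubm^k)\|_2 \leq 2\gamma L$ (cf.\ the proximal-average gap stated before Algorithm~\ref{Alg:MinibatchIPA}), and at a solution the relation $\xbmast = \Gsf(\xbmast + \sbmast)$ forces the dual component $\sbmast \defn \xbmast - \vbmast$ of $\xbmast = \Dsf_\sigma(\vbmast) \in \zer(\Tsf)$ to be a proximal residual, so $\|\sbmast\|_2 \leq \gamma L$. (iii) The key geometric estimate is the \emph{uniform} bound $\|\vbm^k - \vbmast\|_2 \leq R + 2\gamma L$ for all $k \geq 1$: observing that $\vbm^k - \xbm^{k-1} = \Gsf_{i_k}(\xbm^{k-1}+\sbm^{k-1}) - (\xbm^{k-1}+\sbm^{k-1})$ is again a proximal residual gives $\|\vbm^k - \xbm^{k-1}\|_2 \leq \gamma L$, and combining this with Assumption~\ref{As:ExistenceOfFixed} applied to $\xbm^{k-1}$ and with $\|\sbmast\|_2 \leq \gamma L$ via the triangle inequality closes the estimate.

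With these in hand the per-iteration step is routine: expanding $\|\vbm^{k+1} - \vbmast\|_2^2 = \|\Psf(\vbm^k) - \vbmast\|_2^2 - 2\langle \Psf(\vbm^k) - \vbmast, \deltabm^k\rangle + \|\deltabm^k\|_2^2$, applying the batch descent (i), using Cauchy--Schwarz together with nonexpansiveness of $\Psf$ so that $\|\Psf(\vbm^k)-\vbmast\|_2 \leq \|\vbm^k - \vbmast\|_2 \leq R + 2\gamma L$, and invoking (ii) yields
\[
\|\vbm^{k+1} - \vbmast\|_2^2 \leq \|\vbm^k - \vbmast\|_2^2 - \|\Ssf(\vbm^k)\|_2^2 + 4\gamma L R + 12\gamma^2 L^2 .
\]
Summing this telescoping bound over $k = 1, \dots, t$, discarding $-\|\vbm^{t+1}-\vbmast\|_2^2 \leq 0$, bounding the leading term by $\|\vbm^1 - \vbmast\|_2^2 \leq (R+2\gamma L)^2$, dividing by $t$, and using $4\gamma L R + 12\gamma^2 L^2 \leq \max\{\gamma, \gamma^2\}(4 L R + 12 L^2) = \max\{\gamma,\gamma^2\}\,C$ gives~\eqref{Eq:MainBound}. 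Since every error term is bounded surely, the inequality in fact holds pathwise and the expectation in~\eqref{Eq:MainBound} is immediate; the i.i.d.\ sampling rule is not actually needed for this particular averaged bound.

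I expect step (iii) to be the crux and the main departure from the batch fixed-point analysis of~\cite{Ryu.etal2019}: controlling the cross term $\langle \Psf(\vbm^k) - \vbmast, \deltabm^k\rangle$ with the \emph{correct} constant $4LR$ hinges on the uniform estimate $\|\vbm^k - \vbmast\|_2 \leq R + 2\gamma L$, which is precisely where Assumption~\ref{As:ExistenceOfFixed} enters; a naive triangle-inequality bound routed through the (a~priori unbounded) dual variable $\sbm^k$ would inflate this constant and lose the rate. Verifying that $\vbm^k$ stays within one proximal residual of the denoised iterate $\xbm^{k-1}$, and that the solution's dual component is itself $O(\gamma L)$, is therefore the technical heart of the argument.
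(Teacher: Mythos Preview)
Your proof is correct and follows essentially the same route as the paper: the same Douglas--Rachford reformulation $\vbm^{k+1}=\Psf(\vbm^k)-\deltabm^k$, the same deterministic perturbation bound $\|\deltabm^k\|\le 2\gamma L$ (the paper's Lemma~\ref{Sup:Lem:NoiseBound}), the same uniform estimate $\|\vbm^k-\vbmast\|\le R+2\gamma L$ obtained by writing $\vbm^k-\xbm^{k-1}$ and $\xbmast-\vbmast$ as proximal residuals (the paper's Lemma~\ref{Sup:Lem:IterBoun}), and the same telescoping argument. Your observation that the bound in fact holds pathwise for \emph{any} block-selection rule---because every error term is controlled deterministically by Assumption~\ref{As:WeaklyConvexData}---is valid and slightly sharper than the paper's statement in expectation, although the paper's own lemma proofs already establish those deterministic bounds.
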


In order to contextualize this result, we also review the convergence of the traditional PnP-ADMM.

\medskip
\begin{theorem}
\label{Thm:ConvThm2}
Run PnP-ADMM for $t \geq 1$ iterations under Assumptions~\ref{As:WeaklyConvexData}-\ref{As:ExistenceOfFixed} using a fixed penalty parameter $\gamma > 0$. Then, the sequence $\vbm^k = \zbm^k-\sbm^{k-1}$ satisfies
\begin{equation}
\frac{1}{t}\sum_{k = 1}^t \|\Ssf(\vbm^k)\|_2^2 \leq \frac{(R+2\gamma L)^2}{t}.
\end{equation}
\end{theorem}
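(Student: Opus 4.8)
The plan is to reduce deterministic PnP-ADMM to a fixed-point iteration of a single firmly nonexpansive operator and then run a standard averaged-operator (Fej\'er monotonicity) argument, the only nonroutine part being the conversion of the denoiser-iterate bound supplied by Assumption~\ref{As:ExistenceOfFixed} into a bound on the auxiliary sequence $\vbm^k$. First I would record the algebraic identity linking $\Ssf$ in~\eqref{Eq:DRSMonOperator} to the Douglas--Rachford operator $\Psf$ in~\eqref{Eq:DROperator}: expanding $\Psf=\tfrac12\Isf+\tfrac12(2\Gsf-\Isf)(2\Dsf_\sigma-\Isf)$ collapses to $\Psf=\Isf-\Dsf_\sigma+\Gsf(2\Dsf_\sigma-\Isf)=\Isf-\Ssf$, so that $\Ssf=\Isf-\Psf$ and $\zer(\Ssf)=\fix(\Psf)$. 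A short manipulation of~\eqref{Eq:PnPADMM} under the substitution $\vbm^k=\zbm^k-\sbm^{k-1}$, exactly as in~\cite{Ryu.etal2019}, then shows $\vbm^{k+1}=\Psf(\vbm^k)$, whence $\Ssf(\vbm^k)=\vbm^k-\vbm^{k+1}$. This reduces the left-hand side of the claim to the squared successive differences of a fixed-point iteration.

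Next I would verify that $\Psf$ is firmly nonexpansive. Under Assumption~\ref{As:WeaklyConvexData} the operator $\Gsf=\prox_{\gamma g}$ is firmly nonexpansive (as the proximal operator of a proper, closed, convex function), so its reflection $2\Gsf-\Isf$ is nonexpansive; under Assumption~\ref{As:AveragedDenoiser}, firm nonexpansiveness of $\Rsf_\sigma$ is equivalent to firm nonexpansiveness of $\Dsf_\sigma$, so $2\Dsf_\sigma-\Isf$ is likewise nonexpansive. The composition $(2\Gsf-\Isf)(2\Dsf_\sigma-\Isf)$ is therefore nonexpansive, and $\Psf$, being the $(1/2)$-average of $\Isf$ with a nonexpansive operator, is firmly nonexpansive. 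With $\vbmast\in\fix(\Psf)=\zer(\Ssf)$ nonempty by Assumption~\ref{As:ExistenceOfFixed}, firm nonexpansiveness gives $\|\vbm^{k+1}-\vbmast\|_2^2\leq\|\vbm^k-\vbmast\|_2^2-\|(\Isf-\Psf)\vbm^k\|_2^2$, and since $(\Isf-\Psf)\vbm^k=\Ssf(\vbm^k)$ this reads $\|\Ssf(\vbm^k)\|_2^2\leq\|\vbm^k-\vbmast\|_2^2-\|\vbm^{k+1}-\vbmast\|_2^2$. Summing over $k=1,\dots,t$ telescopes to $\sum_{k=1}^t\|\Ssf(\vbm^k)\|_2^2\leq\|\vbm^1-\vbmast\|_2^2$, so the whole theorem comes down to the single estimate $\|\vbm^1-\vbmast\|_2\leq R+2\gamma L$.

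This last estimate is the step I expect to be the crux, since it is where all three assumptions must be combined, and it is what produces the $2\gamma L$ offset. The key observation is that each $\vbm^k$ differs from a denoiser iterate only by a bounded subgradient of $\gamma g$. Writing the optimality condition of the proximal step $\zbm^k=\prox_{\gamma g}(\xbm^{k-1}+\sbm^{k-1})$ gives $\xbm^{k-1}+\sbm^{k-1}-\zbm^k\in\gamma\,\partial g(\zbm^k)$, and substituting $\sbm^{k-1}=\zbm^k-\vbm^k$ shows $\xbm^{k-1}-\vbm^k\in\gamma\,\partial g(\zbm^k)$; since Lipschitz continuity of $g$ with constant $L$ (Assumption~\ref{As:WeaklyConvexData}) bounds every subgradient of $g$ in norm by $L$, this yields $\|\vbm^k-\xbm^{k-1}\|_2\leq\gamma L$. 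The same reasoning at a fixed point, where $\xbmast=\Dsf_\sigma(\vbmast)=\prox_{\gamma g}(2\xbmast-\vbmast)$, gives $\xbmast-\vbmast\in\gamma\,\partial g(\xbmast)$ and hence $\|\xbmast-\vbmast\|_2\leq\gamma L$.

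Finally, combining these with the iterate bound $\|\xbm^0-\xbmast\|_2\leq R$ from Assumption~\ref{As:ExistenceOfFixed} (applied at the initialization) and the triangle inequality,
\begin{equation*}
\|\vbm^1-\vbmast\|_2 \leq \|\vbm^1-\xbm^0\|_2+\|\xbm^0-\xbmast\|_2+\|\xbmast-\vbmast\|_2 \leq \gamma L + R + \gamma L,
\end{equation*}
gives the required $\|\vbm^1-\vbmast\|_2\leq R+2\gamma L$. Dividing the telescoped sum by $t$ then produces the claimed bound $\frac{1}{t}\sum_{k=1}^t\|\Ssf(\vbm^k)\|_2^2\leq (R+2\gamma L)^2/t$. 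I note that this deterministic argument is the $b=1$, noise-free backbone of Theorem~\ref{Thm:ConvThm1}: the stochastic proof should recover the same telescoping via a conditional-expectation version of the firm-nonexpansiveness inequality, with the extra $\max\{\gamma,\gamma^2\}C$ term arising from the variance of using a single $\Gsf_{i_k}$ in place of $\Gsf$.
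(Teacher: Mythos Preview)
Your proposal is correct and follows essentially the same route as the paper: reduce PnP-ADMM to the DRS iteration $\vbm^{k+1}=(\Isf-\Ssf)\vbm^k$, use firm nonexpansiveness of $\Isf-\Ssf$ (via nonexpansiveness of the reflected operators $2\Gsf-\Isf$ and $2\Dsf_\sigma-\Isf$) to obtain the telescoping inequality, and then bound the initial distance $\|\vbm-\vbmast\|$ by $R+2\gamma L$ using the subgradient bound from Lipschitz continuity together with Assumption~\ref{As:ExistenceOfFixed}. The only cosmetic difference is indexing (the paper telescopes from $\vbm^0$ and sums $\|\Ssf\vbm^{k-1}\|^2$, you telescope from $\vbm^1$ and sum $\|\Ssf\vbm^{k}\|^2$), which does not affect the argument.
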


Both proofs are provided in the Appendix~\ref{Sec:ProofThm1}. The proof of Theorem~\ref{Thm:ConvThm2} is a modification of the analysis in~\cite{Ryu.etal2019}, obtained by relaxing the \emph{strong convexity} assumption in~\cite{Ryu.etal2019} by Assumption~\ref{As:WeaklyConvexData} and replacing the assumption that $\Rsf_\sigma$ is a \emph{contraction} in~\cite{Ryu.etal2019} by Assumption~\ref{As:AveragedDenoiser}. Theorem~\ref{Thm:ConvThm2} establishes that the iterates of PnP-ADMM satisfy $\|\Ssf(\vbm^t)\| \rightarrow 0$ as $t \rightarrow \infty$. Since $\Ssf$ is firmly nonexpansive and $\Dsf_\sigma$ is nonexpansive, the Krasnosel'skii-Mann theorem (see Section 5.2 in~\cite{Bauschke.Combettes2017}) directly implies that $\vbm^t \rightarrow \zer(\Ssf)$ and $\xbm^t = \Dsf_\sigma(\vbm^t) \rightarrow \zer(\Tsf)$.

Theorem~\ref{Thm:ConvThm1} establishes that IPA approximates the solution obtained by the full PnP-ADMM up to an error term that depends on the penalty parameter $\gamma$. One can precisely control the accuracy of IPA by setting $\gamma$ to a desired level. In practice, $\gamma$ can be treated as a hyperparameter and tuned to maximize performance for a suitable image quality metric, such as SNR or SSIM. Our numerical results in Section~\ref{Sec:Experiments} corroborate that excellent SNR performance of IPA can be achieved without taking $\|\Ssf(\vbm^t)\|_2$ to zero, which simplifies practical applicability of IPA.  (Note that the convergence analysis for IPA in Theorem~\ref{Thm:ConvThm1} can be easily extended to minibatch IPA with a straightforward extension of Lemma~\ref{Sup:Lem:NoiseBound} in Appendix~\ref{Sup:Sec:LemmasThm1} to several indices, and by following the steps of the main proof in Appendix~\ref{Sup:Sec:MainProof}.)

\begin{figure*}[t]
        \centering\includegraphics[width=16.5cm]{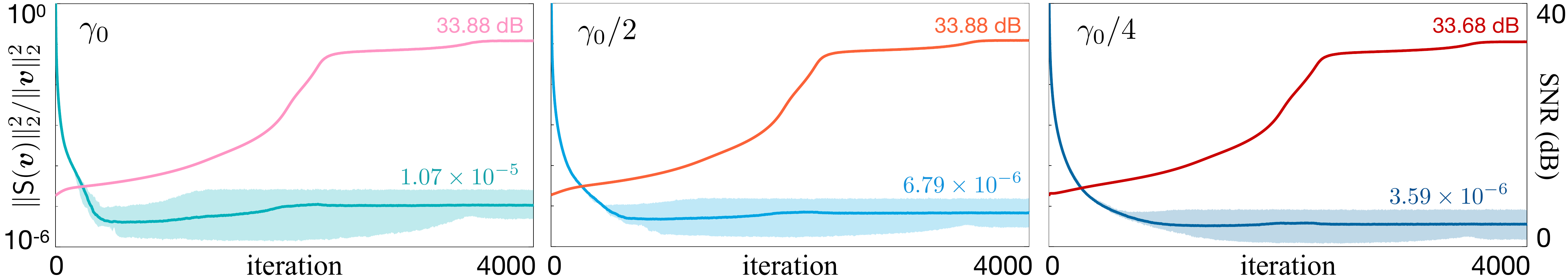}
        \caption{Illustration of the influence of the penalty parameter $\gamma > 0$ on the convergence of IPA for a DnCNN prior. The average normalized distance to $\zer(\Ssf)$ and SNR (dB) are plotted against the iteration number with the shaded areas representing the range of values attained over $12$ test images. The accuracy of IPA improves for smaller values of $\gamma$. However, the SNR performance is nearly identical, indicating that in practice IPA can achieve excellent results for a range of fixed $\gamma$ values.}
        \label{fig:l1_convergence}
\end{figure*}

Finally, note that our analysis can be also performed under assumptions adopted in~\cite{Ryu.etal2019}, namely that $g_i$ are strongly convex and $\Rsf_\sigma$ is a contraction. Such an analysis leads to the statement
\begin{equation}
\label{Eq:MainBoundStrong}
\E\left[\|\xbm^t-\xbmast\|_2\right] \leq \eta^t  (2R+4\gamma L) +  (4\gamma L)/(1-\eta),
\end{equation}
where $0<\eta<1$. Equation~\eqref{Eq:MainBoundStrong} establishes a linear convergence to $\zer(\Tsf)$ up to an error term. A proof of~\eqref{Eq:MainBoundStrong} is provided in the Appendix~\ref{Sec:StrongConvex}. As corroborated by our simulations in Section~\ref{Sec:Experiments}, the actual convergence of IPA holds even more broadly than suggested by both sets of sufficient conditions. This motivates further analysis of IPA under more relaxed assumptions that we leave to future work.

\section{Numerical Validation}
\label{Sec:Experiments}
Recent work has shown the excellent performance of PnP algorithms for smooth data-fidelity terms using advanced denoising priors. Our goal in this section is to extend these studies with simulations validating the effectiveness of IPA for nonsmooth data-fidelity terms and deep neural net priors, as well as demonstrating its scalability to large-scale inverse problems. We consider two applications of the form $\ybm=\Abm\xbm+\ebm$, where $\ebm\in\R^m$ denotes the noise and $\Abm\in\R^{m\times n}$ denotes either a random Gaussian matrix in \emph{compressive sensing (CS)} or the transfer function in \emph{intensity diffraction tomography}~\cite{Ling.etal18}.

Our deep neural net prior is based on the DnCNN architecture~\cite{Zhang.etal2017}, with its batch normalization layers removed for controlling the Lipschitz constant of the network via spectral normalization~\cite{Sedghi.etal2019}. 
We train a nonexpansive residual network $\Rsf_\sigma$ by predicting the noise residual from its noisy input. This means that $\Rsf_\sigma$ satisfies the necessary condition for firm nonexpansiveness of $\Dsf_\sigma$. The training data is generated by adding AWGN to the images from the BSD400 dataset~\cite{Martin.etal2001}. The reconstruction quality is quantified using the signal-to-noise ratio (SNR) in dB. We pre-train several deep neural net models as denoisers for $\sigma \in [1, 10]$, using $\sigma$ intervals of $0.5$, and use the denoiser achieving the best SNR.

\subsection{Integration of Nonsmooth Data-Fidelity Terms and Pretrained Deep Priors}

We first validate the effectiveness of Theorem~\ref{Thm:ConvThm1} for non-smooth data-fidelity terms. The matrix $\Abm$ is generated with i.i.d. zero-mean Gaussian random elements of variance $1/m$, and $\ebm$ as a sparse Bernoulli-Gaussian vector with the sparsity ratio of 0.1. This means that, in expectation, ten percent of the elements of $\ybm$ are contaminated by AWGN. The sparse nature of noise motivates the usage of the $\ell_1$-norm $g(\xbm)=\|\ybm-\Abm\xbm\|_1$, since it can effectively mitigate outliers. The nonsmoothness of $\ell_1$-norm prevents the usage of gradient-based algorithms such as PnP-SGD. On the other hand, the application IPA is facilitated by efficient strategies for computing the proximal operator~\cite{Chambolle.2004,Beck.Teboulle2009a}.

We set the measurement ratio to be approximately $m/n=0.7$ with AWGN of standard deviation $5$. Twelve standard images from \emph{Set 12} are used in testing, each resized to $64\times 64$ pixels for rapid parameter tuning and testing. We quantify the convergence accuracy using the normalized distance $\|\Ssf(\vbm^k)\|_2^2/\|\vbm^k\|_2^2$, which is expected to approach zero as IPA converges to a fixed point.

Theorem~\ref{Thm:ConvThm1} characterizes the convergence of IPA in terms of $\|\Ssf(\vbm^k)\|_2$ up to a constant error term that depends on $\gamma$. This is illustrated in Fig.~\ref{fig:l1_convergence} for three values of the penalty parameter $\gamma\in\{\gamma_0,\gamma_0/2,\gamma_0/4\}$ with $\gamma_0=0.02$. The average normalized distance $\|\Ssf(\vbm^k)\|_2^2/\|\vbm^k\|_2^2$ and SNR are plotted against the iteration number and labeled with their respective final values. The shaded areas represent the range of values attained across all test images. IPA is implemented to use a random half of the elements in $\ybm$ in every iteration to impose the data-consistency. Fig.~\ref{fig:l1_convergence} shows the improved convergence of IPA to $\zer(\Ssf)$ for smaller values of $\gamma$, which is consistent with our theoretical analysis. Specifically, the final accuracy improves approximately $3\times$ (from $1.07\times10^{-5}$ to $3.59\times10^{-6}$) when $\gamma$ is reduced from $\gamma_0$ to $\gamma_0/4$. On the other hand, the SNR values are nearly identical for all three experiments, indicating that in practice different $\gamma$ values lead to fixed points of similar quality. This indicates that IPA can achieve high-quality result without taking $\|\Ssf(\vbm^k)\|_2$ to zero.

\begin{figure*}[t]
        \centering\includegraphics[width=\linewidth]{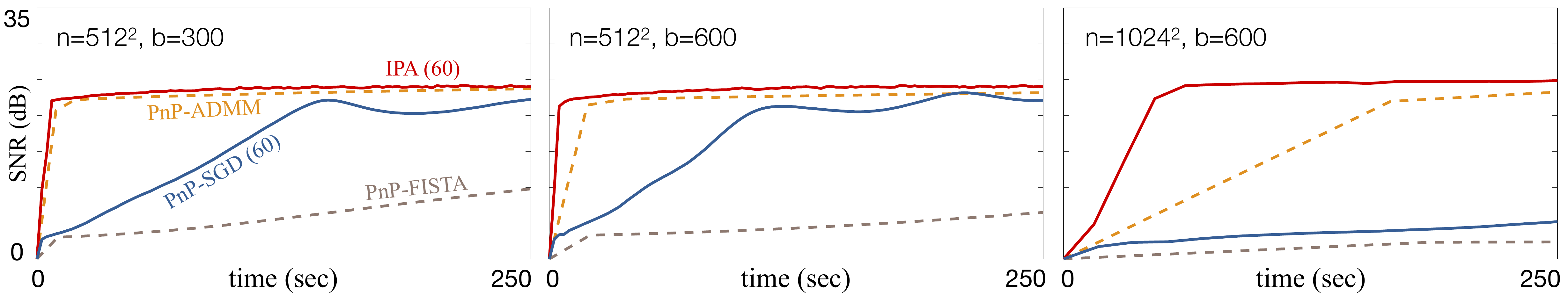}
        \caption{Illustration of scalability of IPA and several widely used PnP algorithms on problems of different sizes. The parameters $n$ and $b$ denote the image size and the number of acquired intensity images, respectively. The average SNR is plotted against time in seconds. Both IPA and PnP-SGD use random minibatches of 60 measurements at every iteration, while PnP-ADMM and PnP-FISTA use all the measurements. The figure highlights the fast empirical convergence of IPA compared to PnP-SGD as well as its ability to address larger problems compared to PnP-ADMM and PnP-FISTA.
        }
        \label{Fig:Scalabiity}
\end{figure*}
\begin{table*}[t]
        \centering
        \caption{Final average SNR (dB) and Runtime obtained by several PnP algorithms on all test images.}
        \label{Tab:Speed}
        \small
        \vspace{5pt}
        \begin{tabular*}{505pt}{C{90pt}C{35pt}C{35pt}C{90pt}C{90pt}C{90pt}C{0pt}C{0pt}}
                \toprule
                \multirow{2}{*}{\textbf{Simulations}} & \multicolumn{2}{c}{\multirow{2}{*}{\textbf{Parameters}}} & \multicolumn{1}{c}{$n=512^2$} & \multicolumn{1}{c}{$n=512^2$} & \multicolumn{1}{c}{$n=1024^2$} & \\
                & & & \multicolumn{1}{c}{($b$ = 300)} & \multicolumn{1}{c}{($b$ = 600)} & \multicolumn{1}{c}{($b$ = 600)} & \\
                \cmidrule(lr){2-3} \cmidrule(lr){4-4} \cmidrule(lr){5-5} \cmidrule(lr){6-6}
                \textbf{Algorithms} & $\sigma$ & $\gamma$ & \multicolumn{3}{c}{SNR in dB (Runtime)} \\
                \midrule \\ [-2.1ex]
                PnP-FISTA & 1 & 5$\times 10^{-4}$ & 22.60 (19.4 min) & 22.79 (42.6 min) & 23.56 (8.1 hr) &   \\ [0.7ex]
                \cdashline{1-6} \\ [-1.4ex]
                PnP-SGD ($60$)  & 1 & 5$\times 10^{-4}$ & 22.31 (7.1 min) & 22.74 (5.2 min) & 23.42 (44.3 min) &  \\ [0.3ex]
                \cdashline{1-6} \\ [-1.4ex]
                PnP-ADMM & 2.5 & 1 & \textbf{24.23} (7.4 min) & \textbf{24.40} (14.7 min) & \textbf{25.50} (1.4 hr) &  \\ [0.7ex]
                \cdashline{1-6} \\ [-1.4ex]
                \proposed~($60$) & 2.5 & 1 & \colorbox{lightgreen}{23.65 (\textbf{1.7 min})} & \colorbox{lightgreen}{23.88 (\textbf{2 min})} & \colorbox{lightgreen}{24.95 (\textbf{11 min})} &   \\
                \bottomrule
        \end{tabular*}
\end{table*}

\subsection{Scalability in Large-scale Optical Tomography}
We now discuss the scalability of IPA on intensity diffraction tomography, which is a data intensive computational imaging modality~\cite{Ling.etal18}. The goal is to recover the spatial distribution of the \emph{complex} permittivity contrast of an object given a set of its intensity-only measurements. In this problem, $\Abm$ consists of a set of $b$ complex matrices $[\Abm_1,\dots,\Abm_b]^\Tsf$, where each $\Abm_i$ is a convolution corresponding to the $i$th measurement $\ybm_i$. We adopt the $\ell_2$-norm loss ${g(\xbm)=\|\ybm-\Abm\xbm\|_2^2}$ as the data-fidelity term to empirically compare the performance of IPA and PnP-SGD on the same problem.

\begin{table*}[t]
        \centering
        \caption{Per-iteration memory usage specification for reconstructing 1024$\times$1024 images}
        \label{Tab:Memory}
        \vspace{5pt}
        \small
        \begin{tabular*}{500pt}{C{40pt}C{40pt} C{100pt}C{72pt} C{100pt}C{72pt}C{0pt}}
                \toprule
                \multicolumn{2}{c}{\textbf{Algorithms}} & \multicolumn{2}{c}{PnP-ADMM} & \multicolumn{2}{c}{\proposed~(\textbf{Ours})} \\
                \cmidrule(lr){3-4} \cmidrule(lr){5-6}
                \multicolumn{2}{c}{\textbf{Variables}} &  size & memory & size & memory   \\
                \midrule \\ [-2ex]
                \multirow{2}{*}{$\{\Abm_i\}$}   & real & $1024\times1024\times600$ & 9.38 GB & $1024\times1024\times60$ & 0.94 GB \\ [0.5ex]
                \cdashline{3-6} \\ [-2ex]
                & imaginary & $1024\times1024\times600$ & 9.38 GB & $1024\times1024\times60$ & 0.94 GB \\ [0.5ex]
                \cdashline{3-6} \\ [-2ex]
                \multicolumn{2}{c}{$\{\ybm_i\}$} & $1024\times1024\times600$  & 18.75 GB & $1024\times1024\times60$  & 1.88 GB \\ [0.5ex]
                \cdashline{3-6} \\ [-2ex]
                \multicolumn{2}{c}{others combined} & ---  & 0.13 GB & --- & 0.13 GB \\
                \midrule
                \multicolumn{2}{c}{\textbf{Total}} & & \textbf{37.63 GB} &  & \colorbox{lightgreen}{\textbf{3.88 GB}}  \\
                \bottomrule
        \end{tabular*}
\end{table*}

In the simulation, we follow the experimental setup in~\cite{Ling.etal18} under AWGN corresponding to an input SNR of 20 dB. We select six images from the CAT2000 dataset~\cite{Borji.etal2015} as our test examples, each cropped to $n$ pixels. We assume real permittivity functions, but still consider complex valued measurement operator $\Abm$ that accounts for both absorption and phase~\cite{Ling.etal18}. Due to the large size of data, we process the measurements in epochs using minibatches of size 60.

Fig.~\ref{Fig:Scalabiity} illustrates the evolution of average SNR against runtime for several PnP algorithms, namely PnP-ADMM, PnP-FISTA, PnP-SGD, and IPA, for images of size $n \in \{512\times512, 1024\times1024\}$ and the total number of intensity measurements
$b\in\{300, 600\}$. The final values of SNR as well as the total runtimes are summarized in Table~\ref{Tab:Speed}. The table highlights the overall best SNR performance in bold and the shortest runtime in light-green. In every iteration, PnP-ADMM and PnP-FISTA use all the measurements, while IPA and PnP-SGD use only a small subset of 60 measurements. IPA thus retains its effectiveness for large values of $b$, while batch algorithms become significantly slower. Moreover, the scalability of IPA over PnP-ADMM becomes more notable when the image size increases. For example, Table~\ref{Tab:Speed} highlights the convergence of IPA to 24.95 dB within 11 minutes, while PnP-ADMM takes 1.4 hours to reach a similar SNR value. Note the rapid progress of PnP-ADMM in the first few iterations, followed by a slow but steady progress until its convergence to the values reported in Table~\ref{Tab:Speed}. This behavior of ADMM is well known and has been widely reported in the literature (see \emph{Section 3.2.2 ``Convergence in Practice''} in~\cite{Boyd.etal2011}). We also observe faster convergence of IPA compared to both PnP-SGD and PnP-FISTA, further highlighting the potential of IPA to address large-scale problems where partial proximal operators are easy to evaluate.

Another key feature of IPA is its memory efficiency due to incremental processing of data. The memory considerations in optical tomography include the size of all the variables related to the desired image $\xbm$, the measured data $\{\ybm_i\}$, and the variables related to the forward model $\{\Abm_i\}$. Table~\ref{Tab:Memory} records the total memory (GB) used by IPA and PnP-ADMM for reconstructing a $1024\times1024$ pixel permittivity image, with the smallest value highlighted in light-green. PnP-ADMM requires 37.63 GB of memory due to its batch processing of the whole dataset, while IPA uses only 3.88 GB---nearly \emph{one-tenth} of the former---by adopting incremental processing of data. In short, our numerical evaluations highlight both fast and stable convergence and flexible memory usage of IPA in the context of large-scale optical tomographic imaging.

\section{Conclusion}
\label{Sec:Conclusion}

This work provides several new insights into the widely used PnP methodology in the context of large-scale imaging problems. First, we have proposed IPA as a new incremental PnP algorithm. IPA extends PnP-ADMM to randomized partial processing of measurements and extends traditional optimization-based ADMM by integrating pre-trained deep neural nets. Second, we have theoretically analyzed IPA under a set of realistic assumptions, showing that IPA can approximate PnP-ADMM to a desired precision by controlling the penalty parameter. Third, our simulations highlight the effectiveness of IPA for nonsmooth data-fidelity terms and deep neural net priors, as well as its scalability to large-scale imaging. We observed faster convergence of IPA compared to several baseline PnP methods, including PnP-ADMM and PnP-SGD, when partial proximal operators can be efficiently evaluated. IPA can thus be an effective alternative to existing algorithms for addressing large-scale imaging problems. For future work, we would like to explore strategies to further relax our assumptions and explore distributed variants of IPA to enhance its performance in parallel settings.

\appendix
We adopt monotone operator theory~\cite{Ryu.Boyd2016, Bauschke.Combettes2017} for a unified analysis of IPA. In Appendix~\ref{Sec:ProofThm1}, we present the convergence analysis of IPA. In Appendix~\ref{Sec:StrongConvex}, we analyze the convergence of the algorithm for strongly convex data-fidelity terms and contractive denoisers. In Appendix~\ref{Sec:Equilibrium}, we discuss interpretation of IPA's fixed-points from the perspective of monotone operator theory. For completeness, in Appendix~\ref{Sec:PnPADMMAnalysis}, we discuss the convergence results for traditional PnP-ADMM~\cite{Ryu.etal2019}.  Additionally, in Supplement~\ref{Sec:BackgroundMaterial}, we provide the background material used in our analysis. In Supplement~\ref{Sec:TechnicalDetails}, we provide additional technical details, omitted from the main paper due to space, such as the details on our deep neural net architecture and results of additional simulations.

For the sake of simplicity, we uses $\|\cdot\|$ to denote the standard $\ell_2$-norm in $\R^n$. We will also use $\Dsf(\cdot)$ instead of $\Dsf_\sigma(\cdot)$ to denote the denoiser, thus dropping the explicit notation for $\sigma$.

\subsection{Convergence Analysis of IPA}
\label{Sec:ProofThm1}

In this section, we present one of the main results in this paper, namely the convergence analysis of IPA. A fixed-point convergence of averaged operators is well-known under the name of Krasnosel'skii-Mann theorem (see Section 5.2 in~\cite{Bauschke.Combettes2017}) and was recently applied to the analysis of PnP-SGD~\cite{Sun.etal2018a}. Additionally, PnP-ADMM was analyzed for strongly convex data-fidelity terms $g$ and contractive residual denoisers $\Rsf_\sigma$~\cite{Ryu.etal2019}. Our analysis here extends these results to IPA by providing an explicit upper bound on the convergence of IPA. In Appendix~\ref{Sup:Sec:MainProof}, we present the main steps of the proof, while in Appendix~\ref{Sup:Sec:LemmasThm1} we prove two technical lemmas useful for our analysis.

\subsubsection{Proof of Theorem~\ref{Thm:ConvThm1}}
\label{Sup:Sec:MainProof}

Appendix~\ref{Sup:Eq:FirmNonExpS} establishes that $\Ssf$ defined in~\eqref{Eq:DRSMonOperator} is firmly nonexpansive. Consider any $\vbmast \in \zer(\Ssf)$ and any $\vbm \in \R^n$, then we have
\begin{align}
\label{Sup:Eq:MonRedIter}
&\|\vbm-\vbmast-\Ssf\vbm\|^2 \\
&\nonumber
= \|\vbm-\vbmast\| - 2(\Ssf\vbm-\Ssf\vbmast)^\Tsf(\vbm-\vbmast) + \|\Ssf\vbm\|^2 \\
&\nonumber
 \leq\|\vbm-\vbmast\|^2 - \|\Ssf\vbm\|^2,
\end{align}
where we used the firm nonexpansiveness of $\Ssf$ and $\Ssf\xbmast = \zerobm$. The direct consequence of~\eqref{Sup:Eq:MonRedIter} is that
\begin{equation*}
\|\vbm-\vbmast-\Ssf\vbm\| \leq \|\vbm-\vbmast\|.
\end{equation*}

We now consider the following two equivalent representations of IPA for some iteration $k \geq 1$
\begin{subequations}
\begin{align}
\label{Sup:Eq:StochDRSvsIPA_a}
&\begin{cases}
\zbm^k = \Gsf_{i_k}(\xbm^{k-1} + \sbm^{k-1})\\
\xbm^k = \Dsf(\zbm^k - \sbm^{k-1}) \\
\sbm^k = \sbm^{k-1} + \xbm^k - \zbm^k,
\end{cases}\\
\quad\Leftrightarrow\quad
\label{Sup:Eq:StochDRSvsIPA_b}
& \begin{cases}
\xbm^{k-1} = \Dsf(\vbm^{k-1})\\
\zbm^k = \Gsf_{i_k}(2\xbm^{k-1}-\vbm^{k-1})\\
\vbm^k = \vbm^{k-1} + \zbm^k - \xbm^{k-1}
\end{cases}
\end{align}
\end{subequations}
where $i_k$ is a random variable uniformly distributed over $\{1, \dots, b\}$, $\Gsf_i = \prox_{\gamma g_i}$ is the proximal operator with respect to $g_i$, and $\Dsf$ is the denoiser. To see the equivalence between~\eqref{Sup:Eq:StochDRSvsIPA_a} and~\eqref{Sup:Eq:StochDRSvsIPA_b}, simply introduce the variable $\vbm^k = \zbm^k - \sbm^{k-1}$ into~\eqref{Sup:Eq:StochDRSvsIPA_b}~\cite{Ryu.etal2019}.
It is straightforward to verify that~\eqref{Sup:Eq:StochDRSvsIPA_a} can also be rewritten as
\begin{equation}
\label{Sup:Eq:StochDRS1}
\vbm^k = \vbm^{k-1} - \Ssf_{i_k}(\vbm^{k-1})\text{ \, with \, }\Ssf_{i_k} \defn \Dsf - \Gsf_{i_k}(2\Dsf-\Isf).
\end{equation}
\medskip
Then, for any $\vbmast \in \zer(\Ssf)$, we have that
\begin{align*}
&\|\vbm^k - \vbmast\|^2 \\
&=\|\vbm^{k-1} - \vbmast - \Ssf\vbm^{k-1}\|^2 + \|\Ssf\vbm^{k-1} - \Ssf_{i_k}\vbm^{k-1}\|^2 \\
& \quad + 2(\Ssf\vbm^{k-1} - \Ssf_{i_k}\vbm^{k-1})^\Tsf(\vbm^{k-1} - \vbmast - \Ssf\vbm^{k-1}) \\
&\leq \|\vbm^{k-1}-\vbmast\|^2 - \|\Ssf\vbm^{k-1}\|^2 + \|\Ssf\vbm^{k-1} - \Ssf_{i_k}\vbm^{k-1}\|^2\\
& \quad + 2 \|\Ssf\vbm^{k-1} - \Ssf_{i_k}\vbm^{k-1}\| \|\vbm^{k-1}-\vbmast\| \\
&\leq \|\vbm^{k-1}-\vbmast\|^2 - \|\Ssf\vbm^{k-1}\|^2 + \|\Ssf\vbm^{k-1} - \Ssf_{i_k}\vbm^{k-1}\|^2\\
& \quad + 2(R+2\gamma L) \|\Ssf\vbm^{k-1} - \Ssf_{i_k}\vbm^{k-1}\| \\
\end{align*}
where in the first inequality we used Cauchy-Schwarz  and~\eqref{Sup:Eq:MonRedIter}, and in the second inequality we used Lemma~\ref{Sup:Lem:IterBoun} in Appendix~\ref{Sup:Sec:LemmasThm1}.
By taking the conditional expectation on both sides, invoking Lemma~\ref{Sup:Lem:NoiseBound} in Appendix~\ref{Sup:Sec:LemmasThm1}, and rearranging the terms, we get
\begin{align*}
\|\Ssf\vbm^{k-1}\|^2 & \leq \|\vbm^{k-1}-\vbmast\|^2 - \E\left[\|\vbm^k-\vbmast\|^2 \;|\; \vbm^{k-1}\right]\\
& \quad  + 4\gamma L R + 12\gamma^2 L^2.
\end{align*}
Hence, by averaging over $t \geq 1$ iterations and taking the total expectation, we obtain
\begin{equation*}
\E\left[\frac{1}{t}\sum_{k = 1}^t \|\Ssf\vbm^{k-1}\|^2\right] \leq \frac{(R+2\gamma L)^2}{t} + 4 \gamma L R + 12\gamma^2 L^2.
\end{equation*}
The final result is obtained by noting that
\begin{equation*}
4\gamma L R + 12\gamma^2 L^2 \leq \max\{\gamma, \gamma^2\} (4LR + 12L^2).
\end{equation*}

\subsubsection{Lemmas Useful for the Proof of Theorem~~\ref{Thm:ConvThm1}}
\label{Sup:Sec:LemmasThm1}

This section presents two technical lemmas used in our analysis in Appendix~\ref{Sup:Sec:MainProof}.

\begin{lemma}
\label{Sup:Lem:NoiseBound}
Assume that Assumptions~\ref{As:WeaklyConvexData}-\ref{As:ExistenceOfFixed} hold and let $i_k$ be a uniform random variable over $\{1, \dots, b\}$. Then, we have that
\begin{equation*}
\E\left[\|\Ssf_{i_k}\vbm - \Ssf\vbm\|^2\right] \leq 4\gamma^2 L^2,\quad \vbm \in \R^n.
\end{equation*}
\end{lemma}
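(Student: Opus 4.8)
The plan is to exploit the fact that $\Ssf$ and $\Ssf_{i_k}$ differ only in their proximal term, while the denoiser contribution is identical. Writing $\ubm \defn (2\Dsf - \Isf)\vbm$, the shared $\Dsf\vbm$ cancels and I would obtain
\[
\Ssf_{i_k}\vbm - \Ssf\vbm = \Gsf(\ubm) - \Gsf_{i_k}(\ubm) = \prox_{\gamma g}(\ubm) - \prox_{\gamma g_{i_k}}(\ubm).
\]
This reduces the whole estimate to controlling the gap between two proximal operators of Lipschitz functions \emph{evaluated at the same point} $\ubm$, which removes any dependence on the denoiser and means that Assumption~\ref{As:AveragedDenoiser} and the bound $R$ of Assumption~\ref{As:ExistenceOfFixed} are not actually needed for this lemma.

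The key quantitative ingredient I would establish is a uniform bound on the displacement produced by a single proximal step of a Lipschitz function: for any proper closed convex $f$ that is $L$-Lipschitz, $\|\prox_{\gamma f}(\ubm) - \ubm\| \leq \gamma L$ for every $\ubm \in \R^n$. This follows from the optimality condition $\ubm - \prox_{\gamma f}(\ubm) \in \gamma\,\partial f(\prox_{\gamma f}(\ubm))$, so that $\ubm - \prox_{\gamma f}(\ubm) = \gamma p$ for some subgradient $p$; the characterization—already invoked for Assumption~\ref{As:WeaklyConvexData}—that $L$-Lipschitz continuity of a convex function is equivalent to all its subgradients having norm at most $L$ then gives $\|p\| \leq L$ and hence $\|\ubm - \prox_{\gamma f}(\ubm)\| = \gamma\|p\| \leq \gamma L$.

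It then remains to apply this displacement bound to both functions appearing in the difference. Each $g_{i_k}$ is $L_{i_k} \leq L$-Lipschitz by Assumption~\ref{As:WeaklyConvexData}, and the average $g = \tfrac{1}{b}\sum_{i=1}^b g_i$ is proper closed convex and Lipschitz with constant $\tfrac{1}{b}\sum_i L_i \leq L$, so both $\prox_{\gamma g}(\ubm)$ and $\prox_{\gamma g_{i_k}}(\ubm)$ lie within $\gamma L$ of $\ubm$. A triangle inequality through $\ubm$ then gives
\[
\|\prox_{\gamma g}(\ubm) - \prox_{\gamma g_{i_k}}(\ubm)\| \leq \|\prox_{\gamma g}(\ubm) - \ubm\| + \|\ubm - \prox_{\gamma g_{i_k}}(\ubm)\| \leq 2\gamma L,
\]
and squaring yields $\|\Ssf_{i_k}\vbm - \Ssf\vbm\|^2 \leq 4\gamma^2 L^2$ for \emph{every} index $i_k$. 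Since the bound is deterministic and uniform in $i_k$, taking the expectation over the uniform random index preserves it, which is exactly the claim. The only genuinely nontrivial step is the subgradient-norm bound behind the displacement estimate; everything else is a cancellation and a triangle inequality, so I expect no real obstacle beyond recalling that standard convex-analysis fact.
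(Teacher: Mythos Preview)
Your proposal is correct and follows essentially the same argument as the paper: both reduce $\Ssf_{i_k}\vbm - \Ssf\vbm$ to the difference $\Gsf(\ubm) - \Gsf_{i_k}(\ubm)$ of two proximal operators at the common point $\ubm = (2\Dsf-\Isf)\vbm$, invoke the optimality condition $\prox_{\gamma f}(\ubm) = \ubm - \gamma p$ with $p$ a subgradient, use the equivalence between $L$-Lipschitz continuity and $\|p\|\leq L$ to bound each displacement by $\gamma L$, and conclude by a triangle inequality that the deterministic bound $2\gamma L$ holds for every index and hence in expectation. Your remark that only Assumption~\ref{As:WeaklyConvexData} is genuinely used is also accurate.
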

\begin{proof}\let\qed\relax
Let $\zbm_i = \Gsf_i(\xbm)$ and $\zbm = \Gsf(\xbm)$ for any $1 \leq i \leq b$ and $\xbm \in \R^n$.  From the optimality conditions for each proximal operator
\begin{equation*}
\Gsf_i\xbm = \prox_{\gamma g_i}(\xbm) = \xbm - \gamma\gbm_i(\zbm_i), \quad\gbm_i(\zbm_i) \in \partial g_i(\zbm_i)
\end{equation*}
and
\begin{equation*}
\Gsf\xbm = \prox_{\gamma g}(\xbm) = \xbm - \gamma\gbm(\zbm)
\end{equation*}
such that
\begin{equation*}
\gbm(\zbm) = \frac{1}{b}\sum_{i = 1}^b \gbm _i(\zbm) \in \partial g(\zbm),
\end{equation*}
where we used Proposition~\ref{Prop:SubdifferentialSum} in Supplement~\ref{Sec:NonsmoothOptimization}. By using the bound on all the subgradients (due to Assumption~\ref{As:WeaklyConvexData} and Proposition~\ref{Prop:SubdifferentialLipschitz} in Supplement~\ref{Sec:NonsmoothOptimization}), we obtain
\begin{align*}
\left\|\Gsf_i(\xbm) - \Gsf(\xbm)\right\|  & = \|\prox_{\gamma g_i}(\xbm)-\prox_{\gamma g}(\xbm)\| \\
& = \gamma\|\gbm_i(\zbm_i)-\gbm(\zbm)\| \leq 2\gamma L,
\end{align*}
where $L > 0$ is the Lipschitz constant of all $g_i$s and $g$. This inequality directly implies that
\begin{align*}
\|\Ssf\vbm-\Ssf_i\vbm\| = \|\Gsf(2\Dsf\vbm-\vbm)-\Gsf_i(2\Dsf\vbm-\vbm)\| \leq 2\gamma L.
\end{align*}
Since, this inequality holds for every $i$, it also holds in expectation.
\end{proof}

\begin{lemma}
\label{Sup:Lem:IterBoun}
Assume that Assumptions~\ref{As:WeaklyConvexData}-\ref{As:ExistenceOfFixed} hold and let the sequence $\{\vbm^k\}$ be generated via the iteration~\eqref{Sup:Eq:StochDRS1}. Then, for any $k \geq 1$, we have that
\begin{equation*}
\|\vbm^k-\vbmast\| \leq (R + 2\gamma L) \quad\text{for all}\quad \vbmast \in \zer(\Ssf).
\end{equation*}
\end{lemma}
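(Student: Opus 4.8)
The plan is to avoid any recursive estimate on $\|\vbm^k-\vbmast\|$ of the form $\|\vbm^k-\vbmast\| \leq \|\vbm^{k-1}-\vbmast\| + 2\gamma L$ (which an argument through firm nonexpansiveness of $\Ssf$ and Lemma~\ref{Sup:Lem:NoiseBound} would yield, but whose iteration grows linearly in $k$ and never stabilizes). Instead, the idea is to express both $\vbm^k$ and the target $\vbmast$ as an \emph{image-space} point shifted by $\gamma$ times a bounded subgradient, and then let the uniform bound $R$ on the image iterates carry the estimate. First I would exploit the optimality condition of the proximal operator $\Gsf_{i_k} = \prox_{\gamma g_{i_k}}$. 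Starting from the equivalent $\vbm$-form~\eqref{Sup:Eq:StochDRSvsIPA_b} of IPA, namely $\xbm^{k-1} = \Dsf(\vbm^{k-1})$, $\zbm^k = \Gsf_{i_k}(2\xbm^{k-1}-\vbm^{k-1})$, and $\vbm^k = \vbm^{k-1} + \zbm^k - \xbm^{k-1}$, the stationarity of the proximal map gives $\zbm^k = (2\xbm^{k-1}-\vbm^{k-1}) - \gamma\gbm_{i_k}(\zbm^k)$ for some $\gbm_{i_k}(\zbm^k) \in \partial g_{i_k}(\zbm^k)$. Substituting this into the update for $\vbm^k$ cancels the $\vbm^{k-1}$ terms and collapses to the clean identity $\vbm^k = \xbm^{k-1} - \gamma\gbm_{i_k}(\zbm^k)$.

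Next I would give the analogous characterization of the fixed point. Given $\vbmast \in \zer(\Ssf)$ with $\Ssf$ as in~\eqref{Eq:DRSMonOperator}, set $\xbmast \defn \Dsf(\vbmast)$, which belongs to $\zer(\Tsf)$. The relation $\Ssf(\vbmast) = \zerobm$ reads $\xbmast = \Gsf(2\xbmast-\vbmast)$, and applying the same proximal optimality condition to $\Gsf = \prox_{\gamma g}$ yields $\xbmast = (2\xbmast-\vbmast) - \gamma\gbm(\xbmast)$ for some $\gbm(\xbmast) \in \partial g(\xbmast)$, i.e.\ $\vbmast = \xbmast - \gamma\gbm(\xbmast)$. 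Thus both $\vbm^k$ and $\vbmast$ now have the same structural form.

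Subtracting the two identities gives $\vbm^k - \vbmast = (\xbm^{k-1}-\xbmast) - \gamma(\gbm_{i_k}(\zbm^k) - \gbm(\xbmast))$, so by the triangle inequality $\|\vbm^k-\vbmast\| \leq \|\xbm^{k-1}-\xbmast\| + \gamma\|\gbm_{i_k}(\zbm^k)\| + \gamma\|\gbm(\xbmast)\|$. I would finish by invoking Assumption~\ref{As:ExistenceOfFixed} for the first term, which gives $\|\xbm^{k-1}-\xbmast\| \leq R$, and the global subgradient bound implied by Assumption~\ref{As:WeaklyConvexData} (via Proposition~\ref{Prop:SubdifferentialLipschitz}) for the remaining two, which gives $\|\gbm_{i_k}(\zbm^k)\| \leq L$ and $\|\gbm(\xbmast)\| \leq L$. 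Together these produce $\|\vbm^k-\vbmast\| \leq R + 2\gamma L$, exactly the claim.

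The one place where care is needed — and where a naive attempt goes astray — is the choice of representation: the estimate must be anchored on the \emph{bounded image iterate} $\xbm^{k-1}$ rather than on $\vbm^{k-1}$ itself, since it is $\|\xbm^{k-1}-\xbmast\|$ (not $\|\vbm^{k-1}-\vbmast\|$) that Assumption~\ref{As:ExistenceOfFixed} controls uniformly. Recognizing that $\vbm^k$ and $\xbm^{k-1}$ differ by a single $\gamma$-scaled subgradient is the crux; once that identity is in hand, the subgradient bounds from Assumption~\ref{As:WeaklyConvexData} and the triangle inequality make the rest routine.
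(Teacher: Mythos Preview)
Your proposal is correct and follows essentially the same approach as the paper: both arguments express $\vbm^k = \xbm^{k-1} - \gamma\gbm_{i_k}(\zbm^k)$ and $\vbmast = \xbmast - \gamma\gbm(\xbmast)$ via the proximal optimality conditions, then combine the triangle inequality with Assumption~\ref{As:ExistenceOfFixed} for the $R$ term and the subgradient bound from Assumption~\ref{As:WeaklyConvexData} (Proposition~\ref{Prop:SubdifferentialLipschitz}) for the $2\gamma L$ term. Your explicit remark about anchoring the estimate on the bounded image iterate $\xbm^{k-1}$ rather than $\vbm^{k-1}$ nicely articulates the key insight that the paper uses without comment.
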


\begin{proof}\let\qed\relax
The optimality of the proximal operator in~\eqref{Sup:Eq:StochDRS1} implies that there exists  $\gbm_{i_k}(\zbm^k) \in \partial g_{i_k}(\zbm^k)$ such that
\begin{align*}
& \zbm^k = \Gsf_{i_k}(2\xbm^{k-1}-\vbm^{k-1}) \\ \quad\Leftrightarrow\quad
& 2\xbm^{k-1} - \vbm^{k-1} - \zbm^k = \gamma \gbm_{i_k}(\zbm^k).
\end{align*}
By applying $\vbm^k =  \vbm^{k-1} - \Ssf_{i_k}(\vbm^{k-1}) = \vbm^{k-1} + \zbm^k - \xbm^{k-1}$ to the equality above, we obtain
\begin{equation*}
\xbm^{k-1} - \vbm^k = \gamma \gbm_{i_k}(\zbm^k) \quad\Leftrightarrow\quad \vbm^k = \xbm^{k-1} -  \gamma \gbm_{i_k}(\zbm^k).
\end{equation*}
Additionally, for any $\vbmast \in \zer(\Ssf)$ and $\xbmast = \Dsf(\vbmast)$, we have that
\begin{align*}
&\Ssf(\vbmast) = \Dsf(\vbmast) - \Gsf(2\Dsf(\vbmast)-\vbmast) = \xbmast - \Gsf(2\xbmast - \vbmast) = \zerobm\\
&\Rightarrow\quad \xbmast - \vbmast = \gamma \gbm(\xbmast) \quad\text{for some}\quad \gbm(\xbmast) \in \partial g(\xbmast).
\end{align*}
Thus, by using Assumption~\ref{As:ExistenceOfFixed} and the bounds on all the subgradients (due to Assumption~\ref{As:WeaklyConvexData} and Proposition~\ref{Prop:SubdifferentialLipschitz} in Supplement~\ref{Sec:NonsmoothOptimization}), we obtain
\begin{align*}
\|\vbm^k-\vbmast\|
&= \|\xbm^{k-1}-\gamma \gbm_{i_k}(\zbm^k) - \xbmast - \gamma \gbm(\xbmast)\| \\
&\leq \|\xbm^{t-1}-\xbmast\| + 2\gamma L \leq (R+2 \gamma L).
\end{align*}
\end{proof}

\subsection{Analysis of IPA for Strongly Convex Functions}
\label{Sec:StrongConvex}
In this section, we perform analysis of IPA under a different set of assumptions, namely under the assumptions adopted in~\cite{Ryu.etal2019}.

\medskip\noindent
\begin{assumption}
\label{Sup:As:StronglyConvexData}
Each $g_i$ is proper, closed, strongly convex with constant $M_i > 0$, and Lipschitz continuous with constant $L_i > 0$. We define the smallest strong convexity constant as $M = \min\{M_1, \dots, M_b\}$ and the largest Lipschitz constant as $L = \max\{L_1, \dots, L_b\}$.
\end{assumption}
This assumption further restricts Assumption~\ref{As:WeaklyConvexData} in the main paper to strongly convex functions.

\medskip\noindent
\begin{assumption}
\label{Sup:As:ContDen}
The residual $\Rsf_\sigma \defn \Isf - \Dsf_\sigma$ of the denoiser $\Dsf_\sigma$ is a contraction. It thus satisfies
\begin{equation*}
\|\Rsf\xbm - \Rsf\ybm\| \leq \epsilon \|\xbm -\ybm\|,
\end{equation*}
for all $\xbm, \ybm \in \R^n$ for some constant  $0 < \epsilon < 1$.
\end{assumption}
This assumption replaces Assumption~\ref{As:AveragedDenoiser} in the main paper by assuming that the residual of the denoiser is a contraction. Note that this can be practically imposed on deep neural net denoisers via spectral normalization~\cite{Miyato.etal2018}. We can then state the following.

\begin{theorem}
\label{Sup:Thm:ConvThm3}
Run IPA for $t \geq 1$ iterations with random i.i.d.~block selection under Assumptions~\ref{As:ExistenceOfFixed}-\ref{Sup:As:ContDen} using a fixed penalty parameter $\gamma > 0$. Then, the iterates of IPA satisfy
\begin{equation*}
\E\left[\|\xbm^k-\xbmast\|\right] \leq \eta^k (2R+4\gamma L) + \frac{4\gamma L}{1-\eta}, \quad 0 < \eta < 1.
\end{equation*}
\end{theorem}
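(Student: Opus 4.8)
Linear convergence of IPA to zer(T) up to an error term, under strong convexity + contractive residual.

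Let me think about this carefully.The plan is to establish a contraction-plus-bounded-perturbation recursion for $\E[\|\vbm^k - \vbmast\|]$ and then unroll it as a geometric series. The key structural fact I would exploit is the decomposition already set up in the IPA analysis: the full step operator is $\Ssf = \Dsf - \Gsf(2\Dsf-\Isf)$, the incremental step uses $\Ssf_{i_k} = \Dsf - \Gsf_{i_k}(2\Dsf-\Isf)$, and the iterate satisfies $\vbm^k = \vbm^{k-1} - \Ssf_{i_k}(\vbm^{k-1})$ from~\eqref{Sup:Eq:StochDRS1}. I would split the deviation into a full-batch part and an incremental perturbation:
\begin{equation*}
\vbm^k - \vbmast = \big(\vbm^{k-1} - \vbmast - \Ssf\vbm^{k-1}\big) + \big(\Ssf\vbm^{k-1} - \Ssf_{i_k}\vbm^{k-1}\big),
\end{equation*}
using $\Ssf\vbmast = \zerobm$. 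By the triangle inequality and the noise bound from Lemma~\ref{Sup:Lem:NoiseBound} (which gives $\|\Ssf_{i_k}\vbm - \Ssf\vbm\| \leq 2\gamma L$ deterministically, hence $\leq 2\gamma L$ in expectation), the perturbation term contributes an additive $2\gamma L$.

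The heart of the argument is showing that the full-batch map $\Isf - \Ssf$ is a contraction with some modulus $\eta \in (0,1)$, i.e.\ $\|(\Isf - \Ssf)\vbm - (\Isf - \Ssf)\vbmast\| \leq \eta\|\vbm - \vbmast\|$. This is where Assumptions~\ref{Sup:As:StronglyConvexData} and~\ref{Sup:As:ContDen} enter. Strong convexity of $g$ with constant $M$ makes $\Gsf = \prox_{\gamma g}$ a contraction with factor $1/(1+\gamma M) < 1$, while the contractive-residual assumption makes $2\Dsf - \Isf = \Isf - 2\Rsf$ Lipschitz with constant strictly controlled by $\epsilon < 1$. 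The composition $\Gsf(2\Dsf - \Isf)$ is then strictly contractive, and I would assemble the modulus $\eta$ explicitly (this is essentially the contraction estimate that underlies the deterministic PnP-ADMM analysis of~\cite{Ryu.etal2019}, which I may invoke since it is stated as a reference result). I expect this contraction estimate to be the main obstacle, since it requires combining the two Lipschitz-type constants into a single $\eta < 1$ and tracking that the strong-convexity gain is not destroyed by the $(2\Dsf-\Isf)$ factor.

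With the contraction in hand, taking conditional expectations and applying the tower property yields the one-step recursion
\begin{equation*}
\E\big[\|\vbm^k - \vbmast\|\big] \leq \eta\,\E\big[\|\vbm^{k-1} - \vbmast\|\big] + 2\gamma L.
\end{equation*}
Unrolling this geometric recursion from the initialization and summing the resulting geometric series gives $\E[\|\vbm^k - \vbmast\|] \leq \eta^k \|\vbm^0 - \vbmast\| + (2\gamma L)/(1-\eta)$. Finally I would pass from $\vbm^k$ to $\xbm^k = \Dsf(\vbm^k)$: since $\Dsf$ is nonexpansive and $\xbmast = \Dsf(\vbmast)$, we have $\|\xbm^k - \xbmast\| \leq \|\vbm^k - \vbmast\|$, so the same bound holds for the $\xbm$-iterates. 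Bounding the initial distance via Assumption~\ref{As:ExistenceOfFixed} (with the $2\gamma L$ slack absorbed as in Lemma~\ref{Sup:Lem:IterBoun}) produces the stated factor $2R + 4\gamma L$ on the transient term and $4\gamma L/(1-\eta)$ on the steady-state error, completing the proof.
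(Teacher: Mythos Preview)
Your approach is essentially the paper's: split $\vbm^k-\vbmast$ into the full-batch part $(\Isf-\Ssf)\vbm^{k-1}-(\Isf-\Ssf)\vbmast$ plus the perturbation $\Ssf\vbm^{k-1}-\Ssf_{i_k}\vbm^{k-1}$, invoke the $\eta$-contractivity of $\Isf-\Ssf$ from~\cite{Ryu.etal2019}, bound the perturbation by $2\gamma L$ via Lemma~\ref{Sup:Lem:NoiseBound}, obtain the recursion $\E[\|\vbm^k-\vbmast\|]\leq\eta\,\E[\|\vbm^{k-1}-\vbmast\|]+2\gamma L$, unroll it, and bound $\|\vbm^0-\vbmast\|$ by $R+2\gamma L$ via Lemma~\ref{Sup:Lem:IterBoun}. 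The paper gets to the same recursion by expanding the \emph{squared} norm, applying Cauchy--Schwarz, and completing the square before taking Jensen; your direct triangle-inequality route is a slightly leaner path to the same inequality.

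There is, however, a genuine slip in your final step. Under the hypotheses of this theorem, Assumption~\ref{As:AveragedDenoiser} is \emph{replaced} by Assumption~\ref{Sup:As:ContDen}, which only says $\Rsf_\sigma$ is an $\epsilon$-contraction. This does \emph{not} make $\Dsf=\Isf-\Rsf_\sigma$ nonexpansive; it only gives $\|\Dsf\xbm-\Dsf\ybm\|\leq(1+\epsilon)\|\xbm-\ybm\|$. Consequently your claim ``since $\Dsf$ is nonexpansive \dots\ $\|\xbm^k-\xbmast\|\leq\|\vbm^k-\vbmast\|$'' is not justified, and the factor of $2$ in the stated constants $2R+4\gamma L$ and $4\gamma L/(1-\eta)$ does \emph{not} come from the initial-distance bound as you suggest. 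The paper handles this by invoking Lemma~9 of~\cite{Ryu.etal2019} (nonexpansiveness of $\tfrac{1}{1+\epsilon}\Dsf$), picking up a multiplicative $(1+\epsilon)\leq 2$ when passing from $\vbm^k$ to $\xbm^k=\Dsf(\vbm^k)$. With that correction, your argument matches the paper's.
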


\begin{proof}\let\qed\relax
It was shown in Theorem 2 of~\cite{Ryu.etal2019} that under Asumptions~\ref{Sup:As:StronglyConvexData} and~\ref{Sup:As:ContDen}, we have that
\begin{equation}
\label{Sup:Eq:Constant}
\|(\Isf-\Ssf)\xbm-(\Isf-\Ssf)\ybm\| \leq \eta \|\xbm - \ybm\|
\end{equation}
with
\begin{equation*}
\eta \defn \left(\frac{1+\epsilon + \epsilon \gamma M + 2\epsilon^2\gamma M}{1 + \gamma M + 2\epsilon \gamma M}\right),
\end{equation*}
for all $\xbm, \ybm \in \R^n$, where $\Ssf$ is given in~\eqref{Eq:DRSMonOperator}. Hence, when
\begin{equation*}
\frac{\epsilon}{\gamma M (1+\epsilon-2\epsilon^2)} < 1,
\end{equation*}
the operator $(\Isf-\Ssf)$ is a contraction.
Using the reasoning in Appendix~\ref{Sec:ProofThm1}, the sequence $\vbm^k = \zbm^k - \sbm^{k-1}$ can be written as
\begin{equation}
\label{Sup:Eq:StochDRS2}
\vbm^k = \vbm^{k-1} - \Ssf_{i_k}(\vbm^{k-1}) \text{ \,  with \, } \Ssf_{i_k} \defn \Dsf - \Gsf_{i_k}(2\Dsf-\Isf).
\end{equation}
\medskip
Then, for any $\vbmast \in \zer(\Ssf)$, we have that
\begin{align*}
&\|\vbm^k - \vbmast\|^2\\
&= \|(\Isf-\Ssf)\vbm^{k-1}-(\Isf-\Ssf)\vbmast\|^2 \\
& \quad + 2((\Isf-\Ssf)\vbm^{k-1}-(\Isf-\Ssf)\vbmast)^\Tsf((\Isf-\Ssf_{i_k})\vbm^{k-1}-\\
& \quad (\Isf-\Ssf)\vbm^{k-1}) + \|(\Isf-\Ssf_{i_k})\vbm^{k-1}-(\Isf-\Ssf)\vbm^{k-1}\|^2 \\
&\leq \eta^2\|\vbm^{k-1}-\vbmast\|^2 + 2\eta \|\vbm^{k-1}-\vbmast\| \|\Ssf_{i_k}\vbm^{k-1}-\Ssf\vbm^{k-1}\| \\
& \quad + \|\Ssf_{i_k}\vbm^{k-1}-\Ssf\vbm^{k-1}\|^2,
\end{align*}
where we used the Cauchy-Schwarz inequality and the fact that $(\Isf-\Ssf)$ is $\eta$-contractive. By taking the conditional expectation on both sides, invoking Lemma~\ref{Sup:Lem:NoiseBound} in Appendix~\ref{Sup:Sec:LemmasThm1}, and completing the square, we get
\begin{equation*}
\E\left[\|\vbm^k-\vbmast\|^2 | \vbm^{k-1}\right] \leq \left(\eta \|\vbm^{k-1}-\vbmast\|+ 2\gamma L\right)^2.
\end{equation*}
Then, by applying the Jensen inequality and taking the total expectation, we get
\begin{equation*}
\E\left[\|\vbm^k-\vbmast\|\right] \leq \eta \E\left[\|\vbm^{k-1}-\vbmast\|\right] + 2\gamma L.
\end{equation*}
By iterating this result and invoking Lemma~\ref{Sup:Lem:IterBoun} from Appendix~\ref{Sup:Sec:LemmasThm1}, we obtain
\begin{equation*}
\E\left[\|\vbm^k-\vbmast\|\right] \leq \eta^k (R+2\gamma L) + (2\gamma L)/(1-\eta).
\end{equation*}
Finally by using the nonexpansiveness of $(1/(1+\epsilon))\Dsf$ (see Lemma~9 in~\cite{Ryu.etal2019}) and the fact that $\xbmast = \Dsf(\vbmast)$, we obtain
\begin{align*}
\E\left[\|\xbm^k-\xbmast\|\right] & \leq (1+\epsilon) \left[\eta^k (R+2\gamma L) + \frac{2\gamma L}{1-\eta}\right]\\
& \leq \eta^k (2R+4\gamma L) + \frac{4\gamma L}{1-\eta}.
\end{align*}
This concludes the proof.
\end{proof}

\subsection{Fixed Point Interpretation}
\label{Sec:Equilibrium}

Fixed points of PnP algorithms have been extensively discussed in the recent literature~\cite{Meinhardt.etal2017, Buzzard.etal2017, Ryu.etal2019}. Our goal in this section is to revisit this topic in a way that leads to a more intuitive equilibrium interpretation of PnP. Our formulation has been inspired from the classical interpretation of ADMM as an algorithm for computing a zero of a sum of two monotone operators~\cite{Eckstein.Bertsekas1992}.

\subsubsection{Equilibrium Points of PnP Algorithms}

It is known that a fixed point $(\xbmast, \zbmast, \sbmast)$ of PnP-ADMM (and of all PnP algorithms~\cite{Meinhardt.etal2017}) satisfies
\begin{subequations}
\label{Sup:Eq:EquilCond1}
\begin{align}
&\xbmast = \Gsf(\xbmast + \sbmast)\\
&\xbmast = \Dsf(\xbmast - \sbmast),
\end{align}
\end{subequations}
with $\xbmast = \zbmast$, where $\Gsf = \prox_{\gamma g}$. Consider the \emph{inverse} of $\Dsf$ at $\xbm \in \R^n$, which is a set-valued operator $\Dsf^{-1}(\xbm) \defn \{\zbm \in \R^n : \xbm = \Dsf_\sigma(\zbm)\}$. Note that the inverse operator exists even when $\Dsf$ is not a bijection (see Section~2 of~\cite{Ryu.Boyd2016}). Then, from the definition of $\Dsf^{-1}$ and optimality conditions of the proximal operator, we can equivalently rewrite~\eqref{Sup:Eq:EquilCond1} as follows
\begin{align*}
&\sbmast \in \gamma \partial g(\xbmast) \text{\quad and \quad }
-\sbmast \in \Dsf^{-1}(\xbmast)-\xbmast.
\end{align*}
This directly leads to the following equivalent representation of PnP fixed points
\begin{equation}
\zerobm \in \Tsf(\xbmast) \defn \gamma \partial g(\xbmast) + (\Dsf^{-1}(\xbmast)-\xbmast).
\end{equation}
Hence, a vector $\xbmast$ computed by PnP can be interpreted as an equilibrium point between two terms with $\gamma > 0$ explicitly influencing the balance.

\subsubsection{Equivalence of Zeros of $\Tsf$ and $\Ssf$}
\label{Sup:Sec:EquivZer}

Define $\vbmast \defn \zbmast - \sbmast$ for a given fixed point $(\xbmast, \zbmast, \sbmast)$ of PnP-ADMM and consider the operator
\begin{equation*}
\Ssf = \Dsf - \Gsf(2\Dsf-\Isf) \quad\text{with}\quad \Gsf = \prox_{\gamma g},
\end{equation*}
which was defined in~\eqref{Eq:DRSMonOperator} of the main paper.
Note that from~\eqref{Sup:Eq:EquilCond1}, we also have $\xbmast = \Dsf(\vbmast)$ and $\vbmast = \xbmast - \sbmast$ (due to $\zbmast = \xbmast$). We then have the following equivalence
\begin{align*}
&\zerobm \in \Tsf(\xbmast) = \gamma \partial g(\xbmast) + (\Dsf^{-1}(\xbmast)-\xbmast) \\
&\Leftrightarrow\quad
\begin{cases}
\xbmast = \Gsf(\xbmast + \sbmast)\\
\xbmast = \Dsf(\xbmast - \sbmast)
\end{cases}\\
&\Leftrightarrow\quad
\begin{cases}
\xbmast = \Gsf(2\xbmast - \vbmast)\\
\xbmast = \Dsf(\vbmast)
\end{cases}\\
&\Leftrightarrow\quad
\Ssf(\vbmast) = \Dsf(\vbmast)-\Gsf(2\Dsf(\vbmast)-\vbmast) = \zerobm,
\end{align*}
where we used the optimality conditions of the proximal operator $\Gsf$. Hence, the condition that $\vbmast = \zbmast - \sbmast \in \zer(\Ssf)$ is equivalent to $\xbmast = \Dsf(\vbmast) \in \zer(\Tsf)$.

\subsubsection{Firm Nonexpansiveness of $\Ssf$}
\label{Sup:Eq:FirmNonExpS}

We finally would like to show that under Assumptions~\ref{As:WeaklyConvexData}-\ref{As:ExistenceOfFixed}, the operator $\Ssf$ is firmly nonexpansive. Assumption~\ref{As:AveragedDenoiser} and Proposition~\ref{Prop:ProxNonexpansive} in Supplement~\ref{Sec:NonsmoothOptimization} imply that $\Dsf$ and $\Gsf$ are firmly nonexpansive. Then, Proposition~\ref{Prop:NonexpEquiv} in Supplement~\ref{Sec:MonotoneOperators} implies that $(2\Dsf-\Isf)$ and $(2\Gsf-\Isf)$ are nonexpansive. Thus, the composition $(2\Gsf-\Isf)(2\Dsf-\Isf)$ is also nonexpansive and
\begin{equation}
\label{Sup:Eq:ADMMFixOp}
(\Isf - \Ssf) = \frac{1}{2}\Isf + \frac{1}{2}(2\Gsf-\Isf)(2\Dsf-\Isf)
\end{equation}
is $(1/2)$-averaged. Then, Proposition~\ref{Prop:NonexpEquiv} in Supplement~\ref{Sec:MonotoneOperators} implies that $\Ssf$ is firmly nonexpansive.

\subsection{Convergence Analysis of PnP-ADMM}
\label{Sec:PnPADMMAnalysis}

The following analysis has been adopted from~\cite{Ryu.etal2019}. For completeness, we summarize the key results useful for our own analysis by restating them under the assumptions in the main paper.

\subsubsection{Equivalence between PnP-ADMM and PnP-DRS}
\label{Sec:ADMMDRS}

An elegant analysis of PnP-ADMM emerges from its interpretation as the Douglas–Rachford splitting (DRS) algorithm~\cite{Ryu.etal2019}. This equivalence is well-known and has been extensively studied in the context of convex optimization~\cite{Eckstein.Bertsekas1992}. Here, we restate the relationship for completeness.

Consider the following DRS (top) and ADMM (bottom) sqeuences
\begin{align*}
&\begin{cases}
\xbm^{k-1} = \Dsf(\vbm^{k-1})\\
\zbm^k = \Gsf(2\xbm^{k-1}-\vbm^{k-1})\\
\vbm^k = \vbm^{k-1} + \zbm^k - \xbm^{k-1}
\end{cases}\\
\quad\Leftrightarrow\quad
&\begin{cases}
\zbm^k = \Gsf(\xbm^{k-1} + \sbm^{k-1})\\
\xbm^k = \Dsf(\zbm^k - \sbm^{k-1}) \\
\sbm^k = \sbm^{k-1} + \xbm^k - \zbm^k,
\end{cases}
\end{align*}
where $\Gsf \defn \prox_{\gamma g}$ is the proximal operator and $\Dsf$ is the denoiser. To see the equivalence between them, simply introduce the variable change $\vbm^k = \zbm^k - \sbm^{k-1}$ into DRS. Note also the DRS sequence can be equivalently written as
\begin{equation*}
\vbm^k = \vbm^{k-1} - \Ssf(\vbm^{k-1}) \quad\text{with}\quad \Ssf \defn \Dsf- \Gsf(2\Dsf-\Isf).
\end{equation*}
To see this simply rearrange the terms in DRS as follows
\begin{align*}
\vbm^k &= \vbm^{k-1} + \Gsf(2\xbm^{k-1}-\vbm^{k-1}) - \xbm^{k-1} \\
&= \vbm^{k-1} - \left[\Dsf(\vbm^{k-1})-\Gsf(2\Dsf(\vbm^{k-1})-\vbm^{k-1})\right].
\end{align*}

\subsubsection{Convergence Analysis of PnP-DRS and PnP-ADMM}

It was established in Appendix~\ref{Sup:Eq:FirmNonExpS} that $\Ssf$ defined in~\eqref{Eq:DRSMonOperator} of the main paper is firmly nonexpansive.

\medskip
Consider a single iteration of DRS $\vbm^+ = \vbm - \Ssf\vbm$. Then, for any $\vbmast \in \zer(\Ssf)$, we have
\begin{align*}
\|\vbm^+-\vbmast\|^2
&= \|\vbm-\vbmast\|^2 - 2(\Ssf\vbm-\Ssf\vbmast)^\Tsf(\vbm-\vbmast) + \|\Ssf\vbm\|^2\\
&\leq \|\vbm-\vbmast\|^2 - \|\Ssf\vbm\|^2,
\end{align*}
where we used $\Ssf\vbmast = \zerobm$ and firm nonexpansiveness of $\Ssf$. By rearranging the terms, we obtain the following upper bound at iteration $k \geq 1$
\begin{equation}
\label{Sup:Eq:UppBoundDRS}
\|\Ssf\vbm^{k-1}\|^2 \leq \|\vbm^{k-1}-\vbmast\|^2 - \|\vbm^k-\vbmast\|^2.
\end{equation}
By averaging the inequality~\eqref{Sup:Eq:UppBoundDRS} over $t \geq 1$ iterations, we obtain
\begin{equation*}
\frac{1}{t}\sum_{k = 1}^t \|\Ssf\vbm^{k-1}\|^2 \leq \frac{\|\vbm^0-\vbmast\|^2}{t} \leq \frac{(R+2\gamma L)^2}{t}
\end{equation*}
where used the bound on $\|\vbm^0-\vbmast\| \leq (R+2\gamma L)$ that can be easily obtained by following the steps in Lemma~\ref{Sup:Lem:IterBoun} in Appendix~\ref{Sup:Sec:LemmasThm1}.

This result directly implies that $\|\Ssf\vbm^t\| \rightarrow 0$ as $t \rightarrow 0$. Additionally, Krasnosel'skii-Mann theorem (see Section 5.2 in~\cite{Bauschke.Combettes2017}) implies that $\vbm^t \rightarrow \zer(\Ssf)$. Then, from continuity of $\Dsf$, we have that $\xbm^t = \Dsf(\vbm^t) \rightarrow \zer(\Tsf)$ (see also Appendix~\ref{Sup:Sec:EquivZer}). This completes the proof.

\bibliographystyle{IEEEtran}


\begin{figure*}[t]
	\begin{center}
		{\huge Supplementary Material for Scalable Plug-and-Play  ADMM with Convergence Guarantees}
	\end{center}
\end{figure*}
\newpage
\maketitle
\makeatletter

\subsection{Background material}
\label{Sec:BackgroundMaterial}

This section summarizes well-known results from the optimization literature that can be found in different forms in standard textbooks~\cite{Rockafellar.Wets1998, Boyd.Vandenberghe2004, Nesterov2004, Bauschke.Combettes2017}.

\subsubsection{Properties of Monotone Operators}
\label{Sec:MonotoneOperators}

\medskip
\begin{definition}
	An operator $\Tsf$ is Lipschitz continuous with constant $\lambda > 0$ if
	\begin{equation*}
	\|\Tsf\xbm - \Tsf\ybm\| \leq \lambda\|\xbm-\ybm\|,\quad \xbm, \ybm \in \R^n.
	\end{equation*}
	When $\lambda = 1$, we say that $\Tsf$ is nonexpansive. When $\lambda < 1$, we say that $\Tsf$ is a contraction.
\end{definition}

\medskip
\begin{definition}
	$\Tsf$ is monotone if
	\begin{equation*}
	(\Tsf\xbm-\Tsf\ybm)^\Tsf(\xbm-\ybm) \geq 0,\quad \xbm, \ybm \in \R^n.
	\end{equation*}
	We say that it is strongly monotone or coercive with parameter $\mu > 0$ if
	\begin{equation*}
	(\Tsf\xbm-\Tsf\ybm)^\Tsf(\xbm-\ybm) \geq \mu\|\xbm-\ybm\|^2,\quad \xbm, \ybm \in \R^n.
	\end{equation*}
\end{definition}

\begin{definition}
	$\Tsf$ is cocoercive with constant $\beta > 0$ if
	\begin{equation*}
	(\Tsf\xbm-\Tsf\ybm)^\Tsf(\xbm-\ybm) \geq \beta\|\Tsf\xbm-\Tsf\ybm\|^2, \quad \xbm, \ybm \in \R^n.
	\end{equation*}
	When $\beta = 1$, we say that $\Tsf$ is firmly nonexpansive.
\end{definition}

\medskip\noindent
The following results are derived from the definition above.

\begin{proposition}
	\label{Prop:NonexpCocoerOp}
	Consider $\Rsf = \Isf - \Tsf$ where $\Tsf: \R^n \rightarrow \R^n$.
	\begin{equation*}
	\Tsf \text{ is nonexpansive } \,\Leftrightarrow\, \Rsf \text{ is $(1/2)$-cocoercive.}
	\end{equation*}
\end{proposition}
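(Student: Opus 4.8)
The plan is to establish the equivalence through a single chain of reversible algebraic manipulations, so that both implications of the ``iff'' are obtained at once rather than argued separately. First I would write out the two properties explicitly for an arbitrary pair $\xbm, \ybm \in \R^n$: nonexpansiveness of $\Tsf$ reads $\|\Tsf\xbm - \Tsf\ybm\| \leq \|\xbm - \ybm\|$, while $(1/2)$-cocoercivity of $\Rsf$ reads $(\Rsf\xbm - \Rsf\ybm)^\Tsf(\xbm - \ybm) \geq \tfrac{1}{2}\|\Rsf\xbm - \Rsf\ybm\|^2$, directly from the definitions of cocoercivity and Lipschitz continuity stated above.

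Next I would introduce the shorthand $\ubm \defn \xbm - \ybm$ and $\vbm \defn \Tsf\xbm - \Tsf\ybm$, so that the substitution $\Rsf = \Isf - \Tsf$ gives $\Rsf\xbm - \Rsf\ybm = \ubm - \vbm$. The cocoercivity inequality then becomes $(\ubm - \vbm)^\Tsf \ubm \geq \tfrac{1}{2}\|\ubm - \vbm\|^2$. Expanding the left-hand side as $\|\ubm\|^2 - \ubm^\Tsf\vbm$ and the right-hand side as $\tfrac{1}{2}\|\ubm\|^2 - \ubm^\Tsf\vbm + \tfrac{1}{2}\|\vbm\|^2$, the cross terms $-\ubm^\Tsf\vbm$ cancel, and the inequality collapses to $\tfrac{1}{2}\|\ubm\|^2 \geq \tfrac{1}{2}\|\vbm\|^2$, i.e.\ $\|\vbm\| \leq \|\ubm\|$. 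This is precisely the nonexpansiveness condition for $\Tsf$.

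Because every step above is either an exact identity or an equivalence of inequalities, the argument reads identically forwards and backwards: assuming either property, the chain delivers the other, and since $\xbm, \ybm$ are arbitrary the equivalence holds globally on $\R^n$. I do not expect a genuine obstacle here; the only point requiring a moment of care is the sign bookkeeping in the expansion, namely confirming that the two $-\ubm^\Tsf\vbm$ terms cancel exactly so that no residual inner-product term survives. It is precisely this cancellation that upgrades the statement from a one-directional implication to a full equivalence, and it is worth flagging it explicitly in the write-up.
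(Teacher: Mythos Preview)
Your proposal is correct and follows essentially the same approach as the paper: both arguments substitute $\Rsf = \Isf - \Tsf$, expand the $(1/2)$-cocoercivity inequality in terms of $\xbm-\ybm$ and $\Tsf\xbm-\Tsf\ybm$, and observe that the inner-product cross terms cancel to leave exactly the nonexpansiveness inequality. The only cosmetic difference is that you package both directions into a single chain of equivalences, whereas the paper writes out one direction and then notes that the converse follows by reversing the same steps.
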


\begin{proof}\let\qed\relax
	First suppose that $\Rsf$ is $1/2$ cocoercive. Let $\hbm \defn \xbm - \ybm$ for any $\xbm, \ybm \in \R^n$. We then have
	\begin{equation*}
	\frac{1}{2}\|\Rsf\xbm-\Rsf\ybm\|^2 \leq (\Rsf\xbm-\Rsf\ybm)^\Tsf\hbm = \|\hbm\|^2 - (\Tsf\xbm-\Tsf\ybm)^\Tsf\hbm.
	\end{equation*}
	We also have that
	\begin{equation*}
	\frac{1}{2}\|\Rsf\xbm-\Rsf\ybm\|^2 = \frac{1}{2}\|\hbm\|^2 - (\Tsf\xbm-\Tsf\ybm)^\Tsf\hbm + \frac{1}{2}\|\Tsf\xbm-\Tsf\ybm\|^2.
	\end{equation*}
	By combining these two and simplifying the expression
	\begin{equation*}
	\|\Tsf\xbm-\Tsf\ybm\| \leq \|\hbm\|.
	\end{equation*}
	The converse can be proved by following this logic in reverse.
\end{proof}

\begin{proposition}
	\label{Prop:ContStrongMon}
	Consider $\Rsf = \Isf - \Tsf$ where $\Tsf: \R^n \rightarrow \R^n$.
	\begin{align*}
	& \Tsf \text{ is Lipschitz continuous with constant } \lambda < 1 \\
	\Rightarrow \quad  & \Rsf \text{ is $(1-\lambda)$-strongly monotone.}
	\end{align*}
\end{proposition}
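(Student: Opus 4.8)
The plan is to unfold the definition of strong monotonicity for $\Rsf$ and reduce everything to an inner product involving $\Tsf$, then control that inner product using the Lipschitz bound. Concretely, I would fix $\xbm, \ybm \in \R^n$, introduce the shorthand $\hbm \defn \xbm - \ybm$, and write $\Rsf\xbm - \Rsf\ybm = \hbm - (\Tsf\xbm - \Tsf\ybm)$ directly from $\Rsf = \Isf - \Tsf$. The goal is to establish the inequality
\begin{equation*}
(\Rsf\xbm - \Rsf\ybm)^\Tsf \hbm \geq (1-\lambda)\|\hbm\|^2,
\end{equation*}
which is exactly $(1-\lambda)$-strong monotonicity.

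The first step is to expand the inner product:
\begin{equation*}
(\Rsf\xbm - \Rsf\ybm)^\Tsf \hbm = \|\hbm\|^2 - (\Tsf\xbm - \Tsf\ybm)^\Tsf \hbm.
\end{equation*}
The second step is to upper-bound the cross term. By the Cauchy-Schwarz inequality followed by the assumed Lipschitz continuity of $\Tsf$ with constant $\lambda$,
\begin{equation*}
(\Tsf\xbm - \Tsf\ybm)^\Tsf \hbm \leq \|\Tsf\xbm - \Tsf\ybm\|\,\|\hbm\| \leq \lambda \|\hbm\|^2.
\end{equation*}
Substituting this bound into the expansion immediately yields $(\Rsf\xbm - \Rsf\ybm)^\Tsf \hbm \geq \|\hbm\|^2 - \lambda\|\hbm\|^2 = (1-\lambda)\|\hbm\|^2$, completing the argument. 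Since $\lambda < 1$, the coefficient $(1-\lambda)$ is strictly positive, so $\Rsf$ is genuinely strongly monotone rather than merely monotone.

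This proof is essentially a one-line calculation, so there is no genuine obstacle; the only point requiring any care is recognizing that the single application of Cauchy-Schwarz combined with the Lipschitz constant is tight enough to deliver the optimal constant $(1-\lambda)$, rather than a weaker estimate. I would note that this result is the strong-monotonicity companion to Proposition~\ref{Prop:NonexpCocoerOp}: the same decomposition $\Rsf = \Isf - \Tsf$ is used, but here the contractivity of $\Tsf$ (rather than mere nonexpansiveness) is what upgrades monotonicity of the residual to strong monotonicity.
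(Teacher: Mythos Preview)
Your proposal is correct and follows essentially the same approach as the paper: expand $(\Rsf\xbm-\Rsf\ybm)^\Tsf(\xbm-\ybm)$ using $\Rsf=\Isf-\Tsf$, then bound the cross term via Cauchy--Schwarz and the Lipschitz constant $\lambda$ to obtain the $(1-\lambda)$ factor.
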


\begin{proof}\let\qed\relax
	By using the Cauchy-Schwarz inequality, we have for all $\xbm, \ybm \in \R^n$
	\begin{align*}
	& (\Rsf\xbm-\Rsf\ybm)^\Tsf(\xbm-\ybm) \\
	&= \|\xbm-\ybm\|^2 - (\Tsf\xbm-\Tsf\ybm)^\Tsf(\xbm-\ybm) \\
	&\geq \|\xbm-\ybm\|^2 - \|\Tsf\xbm-\Tsf\ybm\|\|\xbm-\ybm\| \\
	&\geq \|\xbm-\ybm\|^2 - \lambda\|\xbm-\ybm\|^2 \geq (1-\lambda)\|\xbm-\ybm\|^2.
	\end{align*}
\end{proof}

\begin{definition}
	For a constant $\alpha \in (0, 1)$, we say that $\Tsf$ is $\alpha$-averaged, if there exists a nonexpansive operator $\Nsf$ such that $\Tsf = (1-\alpha)\Isf + \alpha \Nsf$.
\end{definition}

The following characterization is often convenient.
\begin{proposition}
	\label{Prop:AveragedEquiv}
	For a nonexpansive operator $\Tsf$, a constant $\alpha \in (0, 1)$, and the operator ${\Rsf \defn \Isf-\Tsf}$, the following are equivalent
	\begin{enumerate}[label=(\alph*), leftmargin=*]
		\item $\Tsf$ is $\alpha$-averaged
		\item $(1-1/\alpha)\Isf + (1/\alpha)\Tsf$ is nonexpansive
		\item $\|\Tsf\xbm - \Tsf\ybm\|^2 \leq \|\xbm-\ybm\|^2 - \left(\frac{1-\alpha}{\alpha}\right)\|\Rsf\xbm-\Rsf\ybm\|^2,\, \xbm, \ybm \in \R^n$.
	\end{enumerate}
\end{proposition}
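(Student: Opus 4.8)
The plan is to establish the two equivalences (a)$\Leftrightarrow$(b) and (b)$\Leftrightarrow$(c) separately, carrying everything through the residual $\Rsf \defn \Isf - \Tsf$, which turns the whole argument into elementary algebra. The first equivalence is almost immediate from the definition of averagedness: writing $\Tsf = (1-\alpha)\Isf + \alpha\Nsf$ and solving for the candidate operator gives $\Nsf = (1-1/\alpha)\Isf + (1/\alpha)\Tsf$, which is exactly the operator appearing in (b). Since being $\alpha$-averaged means precisely that this (uniquely determined) $\Nsf$ is nonexpansive, (a) and (b) are equivalent by inspection.

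For (b)$\Leftrightarrow$(c), the key simplification is that the operator in (b) can be rewritten as $\Isf - (1/\alpha)\Rsf$, because $(1-1/\alpha)\Isf + (1/\alpha)(\Isf-\Rsf) = \Isf - (1/\alpha)\Rsf$. I would then fix $\xbm, \ybm \in \R^n$, set $\hbm \defn \xbm-\ybm$ and $\rbm \defn \Rsf\xbm - \Rsf\ybm$, and expand
\[
\|\hbm - (1/\alpha)\rbm\|^2 = \|\hbm\|^2 - (2/\alpha)\,\hbm^\Tsf\rbm + (1/\alpha^2)\|\rbm\|^2 .
\]
Nonexpansiveness of the operator in (b) is the statement that this is at most $\|\hbm\|^2$, which after cancelling $\|\hbm\|^2$ and multiplying through by $\alpha>0$ reduces to the single inequality $2\,\hbm^\Tsf\rbm \geq (1/\alpha)\|\rbm\|^2$.

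To close the loop I would expand the left-hand side of (c) using $\Tsf = \Isf-\Rsf$, so that $\Tsf\xbm-\Tsf\ybm = \hbm-\rbm$ and $\|\Tsf\xbm-\Tsf\ybm\|^2 = \|\hbm\|^2 - 2\hbm^\Tsf\rbm + \|\rbm\|^2$. Substituting into (c), the $\|\hbm\|^2$ terms cancel and the coefficient of $\|\rbm\|^2$ collapses through the identity $1 + \tfrac{1-\alpha}{\alpha} = \tfrac{1}{\alpha}$, leaving exactly the same inequality $2\,\hbm^\Tsf\rbm \geq (1/\alpha)\|\rbm\|^2$ obtained from (b). Hence (b) and (c) are equivalent, and the chain is complete.

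Since every manipulation is reversible, no step constitutes a genuine obstacle; the only place requiring care is the bookkeeping of the coefficient $\tfrac{1-\alpha}{\alpha}$ and the verification of the cancellation $1 + \tfrac{1-\alpha}{\alpha} = \tfrac{1}{\alpha}$, which is what makes both (b) and (c) reduce to the identical cocoercivity-type inequality. Recognizing the substitution $\Nsf = \Isf - (1/\alpha)\Rsf$ at the outset is the single observation that renders the entire argument transparent.
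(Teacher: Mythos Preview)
Your argument is correct. The paper itself does not supply a proof of this proposition; it simply cites Proposition~4.35 in Bauschke--Combettes. Your self-contained derivation via the residual substitution $\Nsf = \Isf - (1/\alpha)\Rsf$ is exactly the standard textbook argument, and your reduction of both (b) and (c) to the single inequality $2\hbm^\Tsf\rbm \geq (1/\alpha)\|\rbm\|^2$ is clean and fully reversible. In effect you have filled in what the paper outsourced to the reference, so there is nothing to compare beyond noting that you provide more detail than the paper does.
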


\begin{proof}\let\qed\relax
	See Proposition~4.35 in~\cite{Bauschke.Combettes2017}.
\end{proof}

\begin{proposition}
	\label{Prop:NonexpEquiv}
	Consider $\Tsf: \R^n \rightarrow \R^n$ and $\beta > 0$. Then, the following are equivalent
	\begin{enumerate}[label=(\alph*), leftmargin=*]
		\item $\Tsf$ is $\beta$-cocoercive
		\item $\beta\Tsf$ is firmly nonexpansive
		\item $\Isf-\beta\Tsf$ is firmly nonexpansive.
		\item $\beta\Tsf$ is $(1/2)$-averaged.
		\item $\Isf-2\beta\Tsf$ is nonexpansive.
	\end{enumerate}
\end{proposition}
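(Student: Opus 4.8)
The plan is to show that every listed property is equivalent to the firm nonexpansiveness of $\Usf \defn \beta\Tsf$, reducing each equivalence to a one-line expansion of a squared norm. For arbitrary $\xbm, \ybm \in \R^n$ I would abbreviate $\hbm \defn \xbm - \ybm$ and $\ubm \defn \Usf\xbm - \Usf\ybm = \beta(\Tsf\xbm - \Tsf\ybm)$, so that the defining inequalities all become statements relating $\hbm$ and $\ubm$. The equivalence (a)$\,\Leftrightarrow\,$(b) is then pure rescaling: $\beta$-cocoercivity of $\Tsf$ reads $\beta\|\Tsf\xbm-\Tsf\ybm\|^2 \leq (\Tsf\xbm-\Tsf\ybm)^\Tsf\hbm$, and multiplying through by $\beta>0$ while substituting $\ubm = \beta(\Tsf\xbm-\Tsf\ybm)$ turns this into $\|\ubm\|^2 \leq \ubm^\Tsf\hbm$, which is exactly firm nonexpansiveness of $\Usf$.

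Next I would treat (b), (d), and (e) together. The operator $\Isf - 2\beta\Tsf = \Isf - 2\Usf$ has increment $\hbm - 2\ubm$, so its nonexpansiveness is the inequality $\|\hbm - 2\ubm\|^2 \leq \|\hbm\|^2$; expanding the left-hand side and cancelling $\|\hbm\|^2$ leaves $4\|\ubm\|^2 \leq 4\ubm^\Tsf\hbm$, i.e.\ again $\|\ubm\|^2 \leq \ubm^\Tsf\hbm$. This gives (b)$\,\Leftrightarrow\,$(e) directly. For (d)$\,\Leftrightarrow\,$(e) I would invoke the definition of an averaged operator: $\Usf$ is $(1/2)$-averaged precisely when $2\Usf - \Isf$ is nonexpansive, and nonexpansiveness is invariant under negation, so this is equivalent to nonexpansiveness of $\Isf - 2\Usf$, which is (e). As a cross-check, this matches Proposition~\ref{Prop:AveragedEquiv} specialized to $\alpha = 1/2$.

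Finally I would establish the self-duality (b)$\,\Leftrightarrow\,$(c). The residual $\Isf - \beta\Tsf = \Isf - \Usf$ has increment $\hbm - \ubm$, and writing its firm-nonexpansiveness inequality $(\hbm-\ubm)^\Tsf\hbm \geq \|\hbm-\ubm\|^2$, then expanding both sides and cancelling the common $\|\hbm\|^2$ term, collapses it to $\ubm^\Tsf\hbm \geq \|\ubm\|^2$, once more the firm nonexpansiveness of $\Usf$. Chaining (a)$\,\Leftrightarrow\,$(b)$\,\Leftrightarrow\,$(c), (b)$\,\Leftrightarrow\,$(e), and (e)$\,\Leftrightarrow\,$(d) then yields the mutual equivalence of all five conditions. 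I do not expect a real obstacle here: granting the elementary quadratic expansions, each step is a single cancellation, and the only points demanding care are carrying the factor $\beta$ consistently through the rescaling in (a)$\,\Leftrightarrow\,$(b) and recognizing that the squared-norm expansions in (b)$\,\Leftrightarrow\,$(e) and (b)$\,\Leftrightarrow\,$(c) both reduce to the same inequality $\|\ubm\|^2 \leq \ubm^\Tsf\hbm$.
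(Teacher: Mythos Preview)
Your proposal is correct and follows essentially the same approach as the paper: both reduce each equivalence to a quadratic expansion showing the single inequality $\|\ubm\|^2 \leq \ubm^\Tsf\hbm$ (firm nonexpansiveness of $\Usf = \beta\Tsf$), and both invoke Proposition~\ref{Prop:AveragedEquiv} and the negation-invariance of nonexpansiveness for the averaged/nonexpansive link. Your organization is slightly more streamlined---introducing the notation $\hbm,\ubm$ upfront and routing everything through the same pivot inequality---whereas the paper treats the pairs (b)$\Leftrightarrow$(c), (b)$\Leftrightarrow$(d), and (d)/(b)$\Leftrightarrow$(e) in separate blocks, but the underlying algebra is identical.
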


\begin{proof}\let\qed\relax
	For any $\xbm, \ybm \in \R^n$, let $\hbm \defn \xbm-\ybm$. The equivalence between (a) and (b) is readily observed by defining $\Psf \defn \beta\Tsf$ and noting that
	\begin{align}\nonumber
	&(\Psf\xbm - \Psf\ybm)^\Tsf\hbm = \beta(\Tsf\xbm - \Tsf\ybm)^\Tsf\hbm \nonumber
	\quad\text{and}\quad \\
	\nonumber
	& \|\Psf\xbm-\Psf\ybm\|^2 = \beta^2 \|\Tsf\xbm-\Tsf\ybm\|.
	\end{align}
	
	\medskip\noindent
	Define $\Rsf \defn \Isf - \Psf$ and suppose (b) is true, then
	\begin{align*}
	&(\Rsf\xbm-\Rsf\ybm)^\Tsf\hbm\\
	& = \|\hbm\|^2 - (\Psf\xbm-\Psf\ybm)^\Tsf\hbm \\
	&= \|\Rsf\xbm-\Rsf\ybm\|^2 + (\Psf\xbm-\Psf\ybm)^\Tsf\hbm - \|\Psf\xbm-\Psf\ybm\|^2 \\
	&\geq \|\Rsf\xbm-\Rsf\ybm\|^2.
	\end{align*}
	By repeating the same argument for $\Psf = \Isf - \Rsf$, we establish the full equivalence between (b) and (c).
	
	\medskip\noindent
	The equivalence of (b) and (d) can be seen by noting that
	\begin{align*}
	&2\|\Psf\xbm-\Psf\ybm\|^2 \leq 2(\Psf\xbm-\Psf\ybm)^\Tsf\hbm \\
	\quad \Leftrightarrow\quad & \|\Psf\xbm-\Psf\ybm\|^2 \leq 2(\Psf\xbm-\Psf\ybm)^\Tsf\hbm - \|\Psf\xbm-\Psf\ybm\|^2 \\
	&= \|\hbm\|^2-(\|\hbm\|^2 - 2(\Psf\xbm-\Psf\ybm)^\Tsf\hbm + \|\Psf\xbm-\Psf\ybm\|^2)\nonumber\\
	&= \|\hbm\|^2 - \|\Rsf\xbm-\Rsf\ybm\|^2.
	\end{align*}
	
	\medskip\noindent
	To show the equivalence with (e), first suppose that ${\Nsf \defn \Isf - 2 \Psf}$ is  nonexpansive, then ${\Psf = \frac{1}{2}(\Isf + (-\Nsf))}$ is $1/2$-averaged, which means that it is firmly nonexpansive. On the other hand, if $\Psf$ is firmly nonexpansive, then it is $1/2$-averaged, which means that from Proposition~\ref{Prop:AveragedEquiv}(b) we have that $(1-2)\Isf + 2\Psf = 2\Psf - \Isf = -\Nsf$ is nonexpansive. This directly means that $\Nsf$ is nonexpansive.
\end{proof}

\subsubsection{Convex functions, subdifferentials, and proximal operators}
\label{Sec:NonsmoothOptimization}

\medskip\noindent
\begin{proposition}
	\label{Prop:MonotoneSubgradient}
	Let $f$ be a proper, closed, and convex function. Then for all $\xbm, \ybm \in \R^n$,  $\gbm \in \partial f(\xbm)$, and $ \hbm \in \partial f(\ybm)$,  $\partial f$  is a monotone operator
	\begin{equation*}
	(\gbm-\hbm)^\Tsf(\xbm-\ybm) \geq 0.
	\end{equation*}
	Additionally if $f$ is strongly convex with constant $\mu > 0$, then $\partial f$ is strongly monotone with the same constant.
	\begin{equation*}
	(\gbm-\hbm)^\Tsf(\xbm-\ybm) \geq \mu\|\xbm-\ybm\|^2.
	\end{equation*}
\end{proposition}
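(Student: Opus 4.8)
The plan is to derive both inequalities directly from the defining subgradient inequality, summing two instances so that the function values cancel. This is the standard route for establishing monotonicity of a subdifferential, and the strongly convex case follows by the same manipulation applied to a quadratically sharpened inequality.

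First I would write down the subgradient inequality at each of the two points. Since $\gbm \in \partial f(\xbm)$, the definition of the subdifferential gives $f(\ybm) \geq f(\xbm) + \gbm^\Tsf(\ybm - \xbm)$, and since $\hbm \in \partial f(\ybm)$, it gives $f(\xbm) \geq f(\ybm) + \hbm^\Tsf(\xbm - \ybm)$. Adding these two inequalities cancels $f(\xbm)$ and $f(\ybm)$ from both sides, leaving $0 \geq \gbm^\Tsf(\ybm - \xbm) + \hbm^\Tsf(\xbm - \ybm)$. Recognizing the right-hand side as $-(\gbm - \hbm)^\Tsf(\xbm - \ybm)$ immediately yields the monotonicity claim $(\gbm - \hbm)^\Tsf(\xbm - \ybm) \geq 0$.

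For the strongly convex case I would repeat exactly the same summation, but starting from the strengthened subgradient inequality that characterizes $\mu$-strong convexity: for $\gbm \in \partial f(\xbm)$ one has $f(\ybm) \geq f(\xbm) + \gbm^\Tsf(\ybm - \xbm) + \tfrac{\mu}{2}\|\ybm - \xbm\|^2$, together with the symmetric statement at $\ybm$. Adding the two now produces an extra $\mu\|\xbm - \ybm\|^2$ term, giving $(\gbm - \hbm)^\Tsf(\xbm - \ybm) \geq \mu\|\xbm - \ybm\|^2$, which is the strong monotonicity claim with the same constant.

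The argument presents no genuine obstacle; the only point requiring mild care is invoking the correct quadratically sharpened subgradient inequality for strongly convex $f$, which itself follows by applying the ordinary subgradient inequality to the convex function $\xbm \mapsto f(\xbm) - \tfrac{\mu}{2}\|\xbm\|^2$ and rearranging. Everything else reduces to a two-line addition of inequalities.
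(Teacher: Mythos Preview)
Your proposal is correct and follows essentially the same approach as the paper: write the subgradient inequality at each point (with the quadratic term in the strongly convex case), add the two so that the function values cancel, and read off the (strong) monotonicity. The only cosmetic difference is that the paper handles both cases at once by taking $\mu \geq 0$, whereas you treat $\mu = 0$ and $\mu > 0$ separately.
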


\begin{proof}\let\qed\relax
	Consider a strongly convex function $f$ with a constant $\mu \geq 0$. Then, we have that
	\begin{align*}
	& \begin{cases}
	f(\ybm) \geq f(\xbm) + \gbm^\Tsf(\ybm-\xbm) + \frac{\mu}{2}\|\ybm-\xbm\|^2 \\
	f(\xbm) \geq f(\ybm) + \hbm^\Tsf(\xbm-\ybm) + \frac{\mu}{2}\|\xbm-\ybm\|^2
	\end{cases}\\
	\quad\Rightarrow\quad &  (\gbm-\hbm)^\Tsf(\xbm-\ybm) \geq \mu \|\xbm-\ybm\|^2.
	\end{align*}
	The proof for a weakly convex $f$ is obtained by considering $\mu = 0$ in the inequalities above.
\end{proof}

\medskip\noindent
It is well-known that the proximal operator is firmly nonexpansive.
\begin{proposition}
	\label{Prop:ProxNonexpansive}
	Proximal operator $\prox_{\gamma f}$ of a proper, closed, and convex $f$ is firmly nonexpansive.
\end{proposition}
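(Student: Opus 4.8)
The plan is to derive firm nonexpansiveness directly from the first-order optimality conditions of the proximal minimization combined with the monotonicity of the subdifferential established in Proposition~\ref{Prop:MonotoneSubgradient}. First I would fix arbitrary $\xbm, \ybm \in \R^n$ and set $\ubm \defn \prox_{\gamma f}(\xbm)$ and $\vbm \defn \prox_{\gamma f}(\ybm)$. Since $\ubm$ minimizes the strongly convex objective $\tfrac{1}{2}\|\cdot - \xbm\|^2 + \gamma f$, Fermat's rule gives $\zerobm \in (\ubm - \xbm) + \gamma \partial f(\ubm)$, equivalently $\xbm - \ubm \in \gamma \partial f(\ubm)$; the analogous inclusion $\ybm - \vbm \in \gamma \partial f(\vbm)$ holds at $\vbm$.

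Next I would invoke monotonicity. Taking the subgradients $(\xbm - \ubm)/\gamma \in \partial f(\ubm)$ and $(\ybm - \vbm)/\gamma \in \partial f(\vbm)$ and applying Proposition~\ref{Prop:MonotoneSubgradient} (evaluated at the points $\ubm$ and $\vbm$), then clearing the positive factor $\gamma$, yields
\[
\left((\xbm - \ubm) - (\ybm - \vbm)\right)^\Tsf (\ubm - \vbm) \geq 0.
\]
Rewriting the left-hand side as $(\xbm - \ybm)^\Tsf(\ubm - \vbm) - \|\ubm - \vbm\|^2$ and substituting back $\ubm = \prox_{\gamma f}(\xbm)$ and $\vbm = \prox_{\gamma f}(\ybm)$ gives
\[
(\prox_{\gamma f}\xbm - \prox_{\gamma f}\ybm)^\Tsf (\xbm - \ybm) \geq \|\prox_{\gamma f}\xbm - \prox_{\gamma f}\ybm\|^2,
\]
which is exactly cocoercivity with constant $\beta = 1$, i.e.\ firm nonexpansiveness in the sense of the definition preceding Proposition~\ref{Prop:NonexpCocoerOp}.

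There is essentially no hard step here: the estimate is an immediate consequence of monotonicity once the optimality inclusions are in place. The only point deserving minor care is the justification of Fermat's rule for the nonsmooth objective, namely the subdifferential sum rule $\partial\bigl(\tfrac{1}{2}\|\cdot - \xbm\|^2 + \gamma f\bigr)(\ubm) = (\ubm - \xbm) + \gamma \partial f(\ubm)$, which is valid because the quadratic term is everywhere differentiable so no constraint-qualification difficulty arises. Everything else is algebraic rearrangement, and the inequality can equivalently be phrased through Proposition~\ref{Prop:NonexpEquiv} as nonexpansiveness of $2\prox_{\gamma f} - \Isf$, which is how the result is used in the convergence analysis.
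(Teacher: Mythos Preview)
Your proof is correct and follows essentially the same route as the paper: write the optimality inclusions $\xbm-\ubm\in\gamma\partial f(\ubm)$ and $\ybm-\vbm\in\gamma\partial f(\vbm)$, apply monotonicity of $\partial f$, and rearrange to obtain the firm-nonexpansiveness inequality. The paper's version is just a terser rendering of the same three steps with different variable names.
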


\begin{proof}\let\qed\relax\let\qed\relax
	Denote with $\xbm_1 = \Gsf\zbm_1 = \prox_{\gamma f}(\zbm_1)$ and $\xbm_2 = \Gsf\zbm_2 =\prox_{\gamma f}(\zbm_2)$, then
	\begin{align*}
	&\begin{cases}
	(\zbm_1 - \xbm_1) \in \gamma \partial f(\xbm_1) \\
	(\zbm_2 - \xbm_2) \in \gamma \partial f(\xbm_2)
	\end{cases}\\
	\Rightarrow\quad
	&(\zbm_1-\xbm_1-\zbm_2+\xbm_2)^\Tsf(\xbm_1-\xbm_2) \geq 0\\
	\Rightarrow\quad
	&(\Gsf\zbm_1-\Gsf\zbm_2)^\Tsf(\zbm_1-\zbm_2) \geq \|\Gsf\zbm_1-\Gsf\zbm_2\|^2
	\end{align*}
\end{proof}
\medskip\noindent
The following proposition is sometimes referred to as \emph{Moreau-Rockafellar theorem}. It establishes that for functions defined over all of $\R^n$, we have that $\partial f = \partial f_1 + \cdots + \partial f_m$.

\begin{proposition}
	\label{Prop:SubdifferentialSum}
	Consider ${f = f_1 + \cdots + f_m}$, where $f_1, \dots, f_m$ are proper, closed, and convex functions on $\R^n$.  Then
	\begin{equation*}
	\partial f_1(\xbm) + \cdots + \partial f_m(\xbm) \subset \partial f(\xbm), \quad \xbm \in \R^n
	\end{equation*}
	Moreover, suppose that convex sets $\ri(\dom f_i)$ have a point in common, then we also have
	\begin{equation*}
	\partial f(\xbm) \subset \partial f_1(\xbm) + \cdots + \partial f_m(\xbm), \quad \xbm \in \R^n.
	\end{equation*}
\end{proposition}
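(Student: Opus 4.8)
The plan is to prove the two inclusions separately, since they are of quite different character: the first holds unconditionally and is elementary, while the second is the substantive statement and is exactly where the relative-interior constraint qualification is needed.

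For the first inclusion, I would argue directly from the definition of the subdifferential. Given $\gbm_i \in \partial f_i(\xbm)$ for each $i$, the defining inequality $f_i(\ybm) \geq f_i(\xbm) + \gbm_i^\Tsf(\ybm - \xbm)$ holds for all $\ybm \in \R^n$. Summing these $m$ inequalities yields $f(\ybm) \geq f(\xbm) + (\gbm_1 + \cdots + \gbm_m)^\Tsf(\ybm - \xbm)$, which is precisely the statement $\gbm_1 + \cdots + \gbm_m \in \partial f(\xbm)$. No hypothesis on the domains is used.

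For the second inclusion, I would first reduce to the case $m = 2$ by induction, grouping $f_1 + \cdots + f_{m-1}$ together with $f_m$. The induction step requires that the constraint qualification is inherited, which follows from the fact that for convex sets sharing a point in their relative interiors one has $\ri(\dom f_1 \cap \cdots \cap \dom f_{m-1}) = \ri(\dom f_1) \cap \cdots \cap \ri(\dom f_{m-1})$, so the common relative-interior point survives the grouping. For two functions I would then reduce to the case where $\xbm$ is a global minimizer of $f_1 + f_2$: given $\gbm \in \partial f(\xbm)$, replacing $f_1$ by $\ybm \mapsto f_1(\ybm) - \gbm^\Tsf\ybm$ shifts $\partial f_1$ by $-\gbm$ and makes $\zerobm \in \partial(f_1 + f_2)(\xbm)$, so it suffices to produce $\gbm_1 \in \partial f_1(\xbm)$ with $-\gbm_1 \in \partial f_2(\xbm)$.

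The heart of the argument is then a separation in $\R^{n+1}$. I would form the two convex sets
\begin{align*}
C_1 &= \{(\ubm, \mu) : \mu \geq f_1(\xbm + \ubm) - f_1(\xbm)\}, \\
C_2 &= \{(\ubm, \mu) : \mu \leq f_2(\xbm) - f_2(\xbm + \ubm)\}.
\end{align*}
Since $\xbm$ minimizes $f_1 + f_2$, one has $f_1(\xbm + \ubm) - f_1(\xbm) \geq f_2(\xbm) - f_2(\xbm + \ubm)$ for every $\ubm$, so $C_1$ and $C_2$ have disjoint interiors and touch at $(\zerobm, 0)$; a separating hyperplane thus exists with normal $(\wbm, \beta) \neq (\zerobm, 0)$ and offset $0$. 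Behavior as $\mu \to \pm\infty$ forces $\beta \geq 0$, and after normalizing $\beta = 1$ one reads off from the two supporting inequalities that $\gbm_1 \defn -\wbm$ satisfies $\gbm_1 \in \partial f_1(\xbm)$ and $-\gbm_1 = \wbm \in \partial f_2(\xbm)$, giving $\zerobm \in \partial f_1(\xbm) + \partial f_2(\xbm)$. The hard part will be establishing that the separating hyperplane is \emph{non-vertical}, i.e. $\beta \neq 0$: a vertical hyperplane separates only the domains and produces no subgradient. This is exactly where the hypothesis enters, since a common point of $\ri(\dom f_1) \cap \ri(\dom f_2)$ makes the shifted epigraph of $f_1$ and hypograph of $-f_2$ overlap relative-interior-wise in the $\ubm$-direction, ruling out a vertical separator and guaranteeing $\beta > 0$. (An alternative route, which I would keep in reserve, establishes the same decomposition through exactness of the infimal convolution $f_1^* \,\square\, f_2^*$ of the conjugates under the same qualification.)
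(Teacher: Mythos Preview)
Your proposal is correct and follows the classical route to the Moreau--Rockafellar theorem: the easy inclusion by summing the subgradient inequalities, reduction to $m=2$ via the relative-interior intersection formula, reduction to $\zerobm \in \partial(f_1+f_2)(\xbm)$ by an affine shift, and then a separation argument in $\R^{n+1}$ where the constraint qualification is precisely what rules out a vertical separating hyperplane. This is essentially the argument in Rockafellar's \emph{Convex Analysis}, Theorem~23.8.

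The paper, however, does not prove this proposition at all: its ``proof'' is the single line ``See Theorem 23.8 in~\cite{Rockafellar1970a}.'' So there is nothing to compare at the level of argument---you have supplied a genuine proof sketch where the paper simply defers to the standard reference. Your sketch is in fact the proof one finds in that reference, so in that sense the approaches coincide; you have just unpacked what the paper left as a citation.
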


\begin{proof}\let\qed\relax
	See Theorem 23.8 in~\cite{Rockafellar1970a}.
\end{proof}

\medskip\noindent
\begin{proposition}
	\label{Prop:SubdifferentialLipschitz}
	Let $f$ be a convex function, then we have that
	\begin{align*}
	& f \text{ is Lipschitz continuous with constant $L > 0$} \\
	\quad\Leftrightarrow\quad & \|\gbm(\xbm)\| \leq L, \quad\gbm(\xbm) \in \partial f(\xbm), \quad\xbm \in \R^n.
	\end{align*}
\end{proposition}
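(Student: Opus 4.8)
The plan is to prove the two implications of the equivalence separately, in both cases taking the defining subgradient inequality $f(\ybm) \geq f(\xbm) + \gbm^\Tsf(\ybm-\xbm)$ for $\gbm \in \partial f(\xbm)$ as the central tool and pairing it with the Cauchy--Schwarz inequality. No variational machinery beyond the first-order condition for convexity is needed, so both directions reduce to a short computation.

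For the forward direction ($\Rightarrow$), I would assume $f$ is Lipschitz continuous with constant $L$, fix any $\xbm \in \R^n$ and any $\gbm \in \partial f(\xbm)$, and chain the subgradient inequality with the Lipschitz bound to get $\gbm^\Tsf(\ybm-\xbm) \leq f(\ybm)-f(\xbm) \leq L\|\ybm-\xbm\|$ for every $\ybm$. The key idea is to probe this inequality in the direction of $\gbm$ itself: choosing $\ybm = \xbm + \gbm$ gives $\|\gbm\|^2 \leq L\|\gbm\|$, and dividing by $\|\gbm\|$ (the case $\gbm = \zerobm$ being trivial) yields $\|\gbm\| \leq L$.

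For the reverse direction ($\Leftarrow$), I would assume every subgradient is bounded in norm by $L$, fix arbitrary $\xbm, \ybm \in \R^n$, and select $\gbm \in \partial f(\xbm)$ and $\hbm \in \partial f(\ybm)$. The subgradient inequality at $\xbm$ gives $f(\xbm)-f(\ybm) \leq \gbm^\Tsf(\xbm-\ybm) \leq \|\gbm\|\|\xbm-\ybm\| \leq L\|\xbm-\ybm\|$ via Cauchy--Schwarz and the assumed bound; by symmetry, the inequality at $\ybm$ gives $f(\ybm)-f(\xbm) \leq L\|\xbm-\ybm\|$. Combining the two bounds yields $|f(\xbm)-f(\ybm)| \leq L\|\xbm-\ybm\|$, which is precisely Lipschitz continuity with constant $L$.

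The only point I would treat with care---and the closest thing to an obstacle---is the existence of the subgradients $\gbm, \hbm$ used in the reverse direction. This is guaranteed because $f$ is convex and finite on all of $\R^n$, hence continuous and subdifferentiable at every point, so $\partial f(\xbm) \neq \varnothing$ for all $\xbm$. (Were $f$ allowed to take the value $+\infty$, the statement would have to be localized to the relative interior of $\dom f$, where subgradients still exist.) Beyond this, no genuine difficulty arises; the result is simply the duality between the Lipschitz modulus of $f$ and the uniform bound on its subdifferential.
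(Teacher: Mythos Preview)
Your proof is correct and follows essentially the same approach as the paper: both directions rely on the subgradient inequality combined with Cauchy--Schwarz, and the only cosmetic difference is that in the forward direction you plug in the specific choice $\ybm = \xbm + \gbm$ whereas the paper normalizes an arbitrary direction $\vbm = \ybm - \xbm$ and takes the supremum over unit vectors. Your extra remark on the nonemptiness of $\partial f(\xbm)$ is a welcome clarification that the paper leaves implicit.
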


\begin{proof}\let\qed\relax
	First assume that $\|\gbm(\xbm)\| \leq L$ for all subgradients. Then, from the definition of subgradient
	\begin{align*}
	& \begin{cases}
	f(\xbm) \geq f(\ybm) + \gbm(\ybm)^\Tsf(\xbm-\ybm) \\
	f(\ybm) \geq f(\xbm) + \gbm(\xbm)^\Tsf(\ybm-\xbm)
	\end{cases} \\
	\quad\Leftrightarrow\quad
	& \gbm(\ybm)^\Tsf(\xbm-\ybm) \leq f(\xbm) - f(\ybm) \leq  \gbm(\xbm)^\Tsf(\xbm-\ybm).
	\end{align*}
	Then, from Cauchy-Schwarz inequality, we obtain
	\begin{align*}
	-L \|\xbm-\ybm\| & \leq -\|\gbm(\ybm)\| \|\xbm-\ybm\| \\
	& \leq f(\xbm) - f(\ybm) \leq  \|\gbm(\xbm)\| \|\xbm-\ybm\| \leq L \|\xbm-\ybm\|.
	\end{align*}
	Now assume that $g$ is $L$-Lipschitz continuous. Then, we have for any $\xbm, \ybm \in \R^n$
	\begin{equation*}
	\gbm(\xbm)^\Tsf(\ybm-\xbm) \leq f(\ybm)-f(\xbm) \leq L\|\ybm-\xbm\|.
	\end{equation*}
	Consider $\vbm = \ybm - \xbm \neq \zerobm$, then we have that
	\begin{equation*}
	\gbm(\xbm)^\Tsf\left(\frac{\vbm}{\|\vbm\|}\right) \leq L.
	\end{equation*}
	Since, this must be true for any $\vbm \neq \zerobm$, we directly obtain
	$\|\gbm(\xbm)\| \leq L$.
\end{proof}

\begin{figure}[t]
	\centering\includegraphics[width=.9\linewidth]{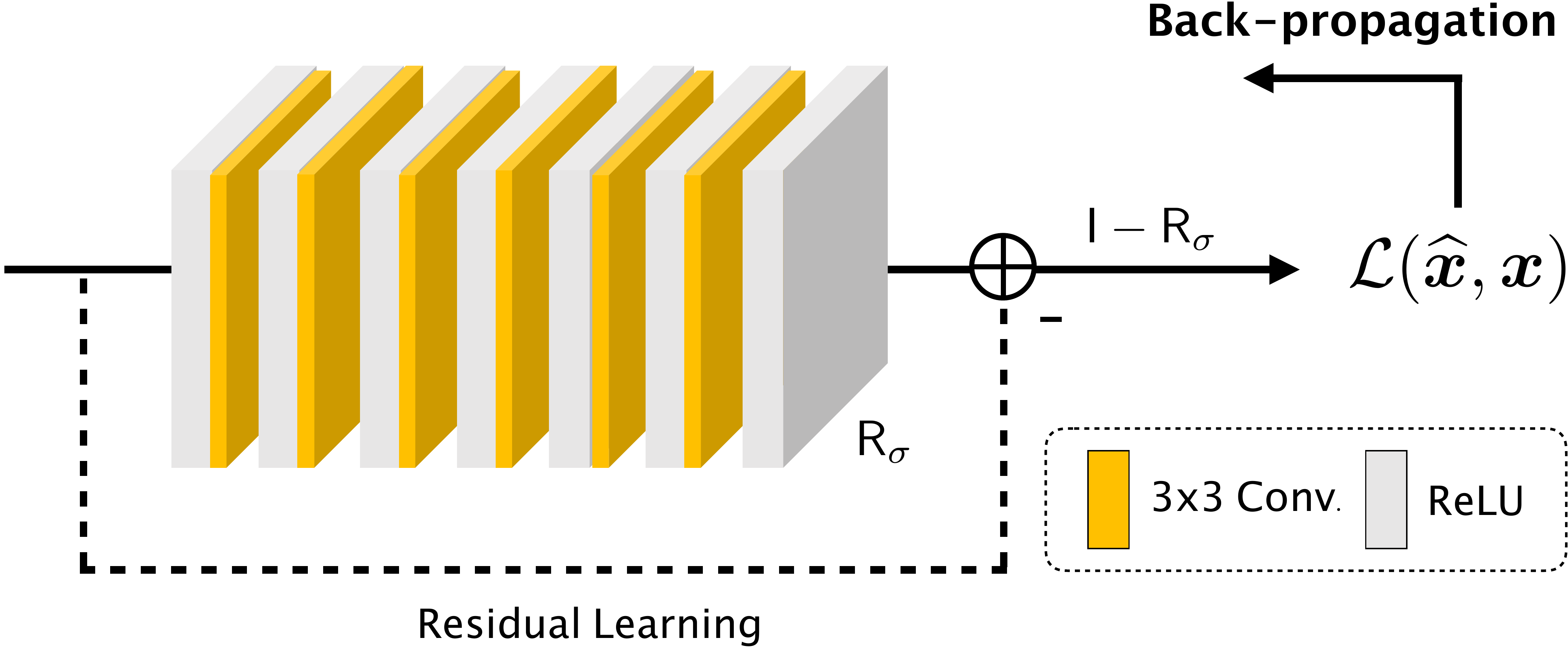}
	\caption{\emph{Illustration of the architecture of DnCNN used in all experiments. Vectors $\xbmhat$ and $\xbm$ denote the denoised image and ground truth, respectively. The neural net is trained to remove the AWGN from its noisy input image. We also constrains the Lipschitz constant of $\Rsf_\sigma$ to be smaller than $1$ by using the spectral normalization technique in~\cite{Sedghi.etal2019}. This provides a necessary condition for the satisfaction of Assumption~\ref{As:AveragedDenoiser}.}}
	\label{Fig:Architecture}
\end{figure}

\subsection{Additional Technical Details}
\label{Sec:TechnicalDetails}

In this section, we present several technical details that were omitted from the main paper due to length restrictions. Section~\ref{Sec:ArchitectureTraining} discusses the architecture and training of the DnCNN prior. Section~\ref{Sec:ExtraValidations} presents extra details and validations that compliment the experiments in the main paper with additional insights for IPA.

\renewcommand{\arraystretch}{1.2}
\begin{table*}[t]
	\centering
	\caption{Per-iteration memory usage specification for reconstructing 512$\times$512 images}
	\label{Tab:Memory2}
	\scriptsize
	\vspace{5pt}
	\begin{tabular*}{500pt}{C{40pt}C{40pt} C{55pt}C{50pt} C{55pt}C{50pt}C{55pt}C{50pt}}
		\toprule
		\multicolumn{2}{c}{\textbf{Algorithms}} & \multicolumn{2}{c}{\proposed~(60)} & \multicolumn{2}{c}{PnP-ADMM (300)} & \multicolumn{2}{c}{PnP-ADMM (600)} \\
		\cmidrule(lr){3-4} \cmidrule(lr){5-6} \cmidrule(lr){7-8}
		\multicolumn{2}{c}{\textbf{Variables}} & size & memory & size & memory & size & memory  \\
		\midrule
		\multirow{2}{*}{$\{\Abm_i\}$}   & real & $512\times512\times60$ & 0.23 GB & $512\times512\times300$ & 1.17 GB & $512\times512\times600$ & 2.34 GB  \\[0.5ex]
		\cdashline{3-8}
		& imaginary & $512\times512\times60$ & 0.23 GB & $512\times512\times300$ & 1.17 GB & $512\times512\times600$ & 2.34 GB \\[0.5ex]
		\cdashline{3-8}
		\multicolumn{2}{c}{$\{\ybm_i\}$} & $512\times512\times60$  & 0.47 GB & $512\times512\times300$  & 2.34 GB & $512\times512\times600$ & 4.69 GB \\[0.5ex]
		\cdashline{3-8}
		\multicolumn{2}{c}{others combined} & --- & 0.03 GB & --- & 0.03 GB & --- & 0.03 GB\\
		\midrule
		\multicolumn{2}{c}{\textbf{Total}} & & \colorbox{lightgreen}{\textbf{0.97 GB}} & & \textbf{4.72 GB} & & \textbf{9.41 GB} \\
		\bottomrule
	\end{tabular*}
\end{table*}

\begin{figure*}[t]
	\centering\includegraphics[width=\linewidth]{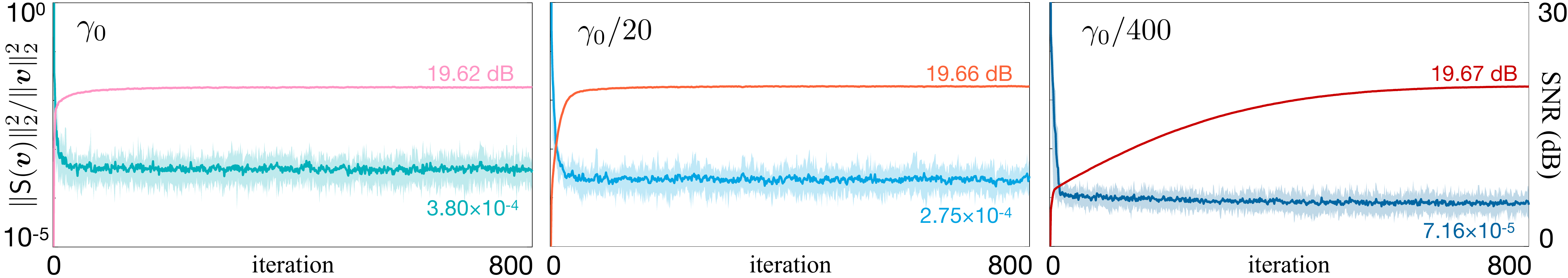}
	\caption{\emph{Illustration of the convergence of IPA for a DnCNN prior under drastically changed $\gamma$ values. The average normalized distance to $\zer(\Ssf)$ and SNR (dB) are plotted against the iteration number with the shaded areas representing the range of values attained over $12$ test images. In practice, the convergence speed improves with larger values of $\gamma$. However, \proposed~still can achieve same level of SNR results for a wide range of $\gamma$ values.}}
	\label{Fig:l2_convergence}
\end{figure*}

\begin{table*}[t]
	\centering
	\caption{Optimized SNR (dB) obtained by \proposed~under different priors for images from \emph{Set 12}}
	\label{Tab:SNR}
	\scriptsize
	\vspace{5pt}
	\begin{tabular*}{500pt}{C{70pt}C{70pt}C{70pt}C{70pt}C{70pt}C{70pt}}
		\toprule
		\multirow{2}{*}{\textbf{Algorithms}} & PnP-ADMM & \multicolumn{3}{c}{\proposed~(\textbf{Ours})} & PnP-ADMM \\
		& (Fixed 5) & \multicolumn{3}{c}{(Random 5 from full 60)} & (Full 60) \\
		\cmidrule(lr){2-2} \cmidrule(lr){3-5} \cmidrule(lr){6-6}
		\textbf{Denoisers} & DnCNN  & TV & BM3D & DnCNN & DnCNN   \\
		\midrule
		\textit{Cameraman} & 15.95     & 17.45     & 17.38     & 18.16       & 18.34         \\
		\textit{House}    & 19.22     & 21.79     & 21.97     & 22.45       & 22.94         \\
		\textit{Pepper}   & 17.06   & 18.68   & 19.55     & 20.60     & 21.11       \\
		\textit{Starfish} & 18.20    & 19.29     & 20.29     & 21.64     & 22.22       \\
		\textit{Monarch}  & 17.70    & 19.81     & 18.66     & 20.85     & 21.60           \\
		\textit{Aircraft}  & 17.15   & 18.67     & 18.83     & 19.28     & 19.54         \\
		\textit{Parrot}  & 17.13    & 18.60     & 18.27     & 18.72     & 19.18             \\
		\textit{Lenna}  & 15.41    & 16.48     & 16.32     & 16.94     & 17.13              \\
		\textit{Barbara}  & 13.63    & 16.00     & 17.53     & 16.58     & 16.85              \\
		\textit{Boat}  & 17.98    & 19.35     & 20.21     & 20.95     & 21.34              \\
		\textit{Pirate}  & 17.93    & 19.36     & 19.45     & 19.88     & 20.10              \\
		\textit{Couple}  & 15.40    & 17.31     & 17.53     & 18.24     & 18.57              \\
		\midrule
		\textbf{Average} & \textbf{16.90} & \textbf{18.57} & \textbf{18.83} & \colorbox{lightgreen}{\textbf{19.52}} & \textbf{19.91}        \\
		\bottomrule
	\end{tabular*}
\end{table*}
\begin{figure*}[t]
	\centering\includegraphics[width=0.95\linewidth]{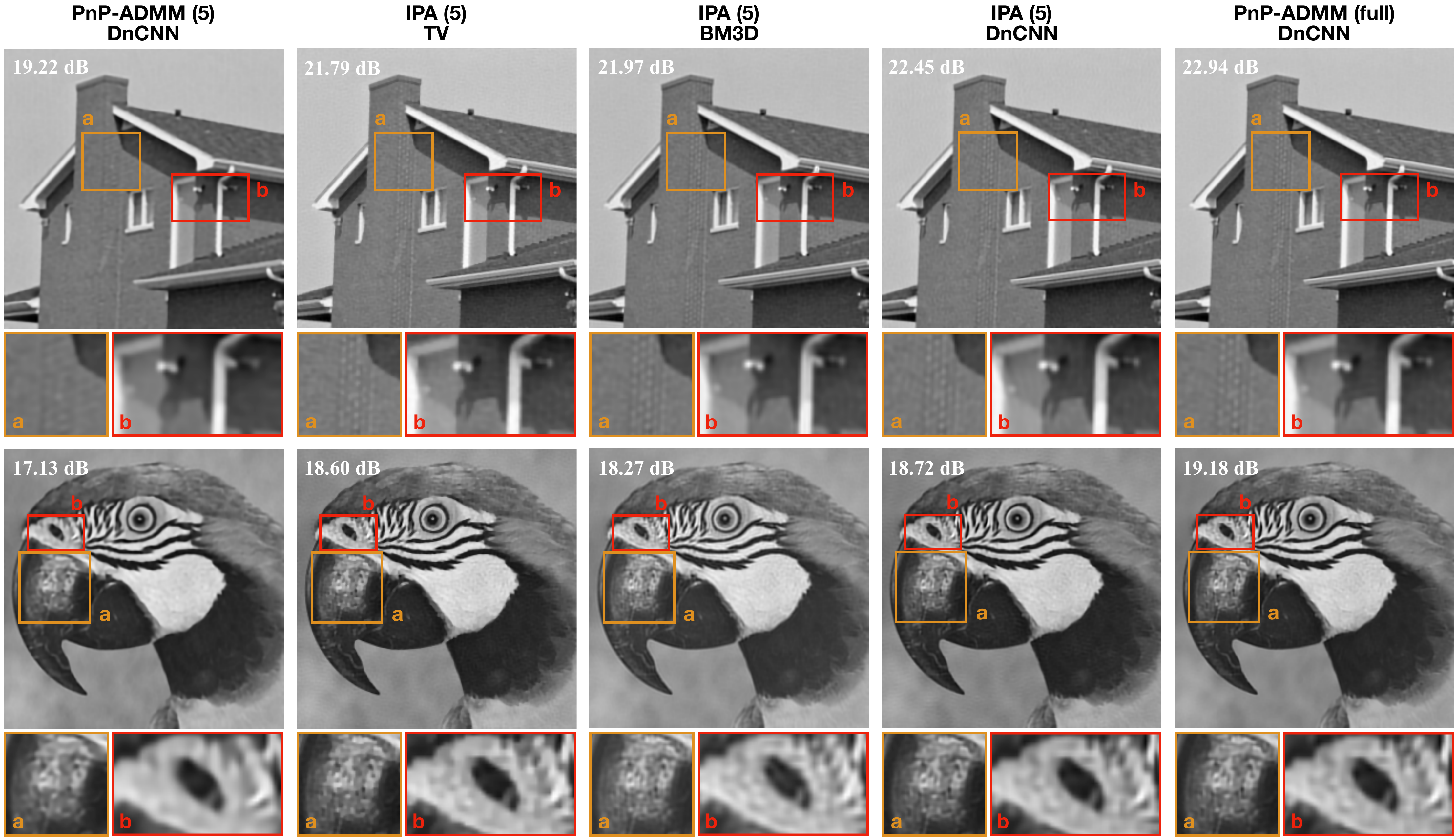}
	\caption{\emph{Visual examples of the reconstructed House (upper) and Parrot (bottom) images by \proposed~and PnP-ADMM. The first and last columns correspond to PnP-ADMM under DnCNN with 5 fixed measurements and with the full 60 measurements, respectively. The second, third, and fourth column correspond to \proposed~with a small minibatch of size 5 under TV, BM3D, and DnCNN, respectively. Each image is labeled by its SNR (dB) with respect to the original image, and the visual difference is highlighted by the boxes underneath. Note that \proposed~recovers the details lost by the batch algorithm with the same computational cost and achieves the same high-quality results as the full batch algorithm.}}
	\label{Fig:VisualExamples}
\end{figure*}

\subsubsection{Architecture and Training of the DnCNN Prior}
\label{Sec:ArchitectureTraining}

Fig.~\ref{Fig:Architecture} visualizes the architectural details of the DnCNN prior used in our experiments. In total, the network contains $7$ layers, of which the first $6$ layers consist of a convolutional layer and a rectified linear unit (ReLU), while the last layer is just a convolution. A skip connection from the input to the output is implemented to enforce residual learning. The output images of the first $6$ layers have $64$ feature maps while that of the last layer is a single-channel image. We set all convolutional kernels to be $3\times3$ with stride $1$, which indicates that intermediate images have the same spatial size as the input image. We generated 11101 training examples by adding AWGN to 400 images from the BSD400 dataset~\cite{Martin.etal2001} and extracting patches of  $128 \times 128$ pixels with stride $64$. We trained DnCNN to optimize the \emph{mean squared error} by using the Adam optimizer~\cite{Kingma.Ba2015}.

We use the spectral normalization technique in~\cite{Sedghi.etal2019} to control the global Lipschitz constant (LC) of DnCNN. In the training, we constrain the residual network $\Rsf_\sigma$ to have LC smaller than $1$. Since the firmly non-expansiveness implies non-expansiveness, this provides a \emph{necessary} condition for $\Rsf_\sigma$ to satisfy Assumption~\ref{As:AveragedDenoiser}.

\subsubsection{Extra Details and Validations for Optical Tomography}
\label{Sec:ExtraValidations}

All experiments are run on the machine equipped with an Intel Core i7 Processor that has 6 cores of 3.2 GHz and 32 GBs of DDR memory. We trained all neural nets using NVIDIA RTX 2080 GPUs. We define the SNR (dB) used in the experiments as
\begin{equation}
\operatorname{SNR}(\hat{\xbm}, \xbm) \triangleq \max _{a, b \in \mathbb{R}}\left\{20 \operatorname{log} _{10}\left(\frac{\|\xbm\|_{\ell_{2}}}{\|\xbm-a \hat{\xbm}+b\|_{\ell_{2}}}\right)\right\}, \nonumber
\end{equation}
where $\hat{\xbm}$ represents the estimate and $\xbm$ denotes the ground truth.

For intensity diffraction tomography, we implemented an epoch-based selection rule due to the large size of data. We randomly divide the measurements (along with the corresponding forward operators) into non-overlapping chunks of size $60$ and save these chunks on the hard drive. At every iteration, \proposed~loads only a single random chunk into the memory while the full-batch PnP-ADMM loads all chunks sequentially and process the full set of  measurements. This leads to the lower per iteration cost and less memory usage of \proposed~than PnP-ADMM. Table~\ref{Tab:Memory2} shows extra examples of the memory usage specification for reconstructing $512\times512$ pixel permittivity images. These results follow the same trend observed in Table~\ref{Tab:Memory} of the main paper.
We also conduct some extra validations that provides additional insights into \proposed. In these simulations, we use images of size $254\times254$ pixels from \emph{Set 12} as test examples. We assume real permittivity functions with the total number of measurement $b=60$.

Fig.~\ref{Fig:l2_convergence} illustrates the evolution of the convergence of IPA for different values of the penalty parameter. We consider three different values of $\gamma\in\{\gamma_0, \gamma_0/20, \gamma/400\}$ with $\gamma_0=20$. The average normalized distance $\|\Ssf(\vbm^k)\|_2^2/\|\vbm^k\|_2^2$ and SNR are plotted against the iteration number and labeled with their respective final values. The shaded areas represent the range of values attained across all test images. \proposed~randomly select $5$ measurements in every iteration to impose the data-consistency.
Fig.~\ref{Fig:l2_convergence} compliments the results in Fig~\ref{fig:l1_convergence} of the main paper by showing the fast convergence speed in practice with larger values of $\gamma$. On the other hand, this plot further demonstrates that \proposed~is stable in terms of the SNR results for a wide range of $\gamma$ values.

Prior work has discussed the influence of the denoising prior on the final result. Our last simulation compares the final reconstructed images of \proposed~by using TV, BM3D, and DnCNN. Since TV is a proximal operator, it serves as a baseline. Table~\ref{Tab:SNR} compares the average SNR values obtained by different image priors. We include the results of PnP-ADMM using $5$ fixed measurements and the full batch as reference. Visual examples of \emph{House} and \emph{Parrot} are shown in Fig.~\ref{Fig:VisualExamples}. First, the table numerically illustrates significant improvement of \proposed~over PnP-ADMM under the same computational budget. Second, leveraging learned priors in \proposed~leads to the better reconstruction than other priors. For instance, DnCNN outperforms TV and BM3D by $0.7$ dB in SNR. Last, the agreement between \proposed~and the full batch PnP-ADMM highlights the nearly optimal performance of our algorithm at a significantly lower computational cost and memory usage.

\end{document}